\let\mathcal\mathscr
\theoremstyle{thmstyleone}%
\newtheorem{theorem}{Theorem}%  meant for continuous numbers
\theoremstyle{thmstyletwo}%
\theoremstyle{thmstylethree}%
\begin{document}

\title[Enhanced 3D Shape Analysis via Information Geometry]{Enhanced 3D Shape Analysis via Information Geomerty}

\author[1]{\fnm{Amit} \sur{Vishwakarma}}\email{amitvishwakarma.22@res.iist.ac.in}
\author*[1]{\fnm{K.S. Subrahamanian} \sur{Moosath}}\email{smoosath@iist.ac.in}

\affil[1]{\orgname{Indian Institute of Space Science and Technology}, \orgaddress{\city{Thiruvananthapuram}, \country{India}}}

%%==================================%%
%% Sample for unstructured abstract %%
%%==================================%%

\abstract{
Three-dimensional point clouds provide highly accurate digital representations of objects, essential for applications in computer graphics, photogrammetry, computer vision, and robotics. However, comparing point clouds faces significant challenges due to their unstructured nature and the complex geometry of the surfaces they represent. Traditional geometric metrics such as Hausdorff and Chamfer distances often fail to capture global statistical structure and exhibit sensitivity to outliers, while existing Kullback-Leibler (KL) divergence approximations for Gaussian Mixture Models can produce unbounded or numerically unstable values. This paper introduces an information geometric framework for 3D point cloud shape analysis by representing point clouds as Gaussian Mixture Models (GMMs) on a statistical manifold. We prove that the space of GMMs forms a statistical manifold and propose the Modified Symmetric Kullback-Leibler (MSKL) divergence with theoretically guaranteed upper and lower bounds, ensuring numerical stability for all GMM comparisons. Through comprehensive experiments on human pose discrimination (MPI-FAUST dataset) and animal shape comparison (G-PCD dataset), we demonstrate that MSKL provides stable and monotonically varying values that directly reflect geometric variation, outperforming traditional distances and existing KL approximations.
}

\keywords{Information Geometry, Point Cloud, Gaussian Mixture Model, Statistical Manifold, Divergence}

%%\pacs[JEL Classification]{D8, H51}

%%\pacs[MSC Classification]{35A01, 65L10, 65L12, 65L20, 65L70}

\maketitle

\section{Introduction}\label{sec1}

Three-dimensional point clouds provide highly accurate digital representations of objects which makes it important for the applications in computer graphics, photogrammetry, computer vision, construction, and remote sensing \cite{10.1016/j.aei.2019.02.007,chen2022stpls3d,akagic2022cv}. Recent work has addressed various geometric analysis tasks including robust symmetry detection in the presence of outliers \cite{Nagar2025}. Extracting meaningful information from point clouds is fundamental for shape analysis tasks such as pose estimation, deformation tracking, registration, retrieval and autoencoding\cite{kucak2022ransac, zhang2025dlpc, huang2021pcsurvey}. However, comparing point clouds faces unique challenges due to their unstructured nature and the complex geometry of the surfaces they represent. Unlike structured grid data with natural Euclidean metrics, point clouds lack a common coordinate system, making direct comparisons difficult. When comparing shapes sampled from 3D surfaces, the objective is to capture global geometric similarity rather than point-wise correspondence.

Classical geometric distances such as Hausdorff \cite{hausdorff2008gesammelte} and Chamfer \cite{yang2018} measure spatial proximity between point sets. While effective for certain tasks, they often fail to capture broader statistical structure, exhibit sensitivity to outliers and sampling irregularities, and do not characterize the global distribution of points. 

Earth Mover's Distance (EMD) \cite{rubner2000earth} computes similarity as the minimum cost of transforming one distribution into another, providing a transport-based metric. However, EMD requires solving an expensive optimization problem that scales poorly with point cloud size and often struggles to capture intrinsic geometric relationship between the shapes.

Recent works have explored probabilistic representations using GMMs for point cloud registration and comparison \cite{jian2005robust}. While these methods make use of statistical modeling, they generally rely on approximations of the Kullback–Leibler (KL) divergence and these approximations have some limitations:
\begin{itemize}
    \item KL Weighted-Average (KL$_{\text{WA}}$) \cite{1238387,hershey2007approximating} approximation provides a lower bound by averaging pairwise KL divergences between Gaussian components weighted by mixing coefficients. However, this approximation can significantly underestimate the true divergence, particularly when mixture components have poor overlap.
    \item KL Matching-Based (KL$_{\text{MB}}$) \cite{1238387,hershey2007approximating} approximation matches each component of one mixture to the closest component in another, yielding an upper bound. This matching strategy can overestimate divergence and is sensitive to the number and initialization of components.
    \item Furthermore, the standard KL divergence is asymmetric (${\text{KL}}(p||q) \neq {\text{KL}}(q||p)$) and \textbf{can be unbounded} when comparing GMMs, leading to numerical instabilities and undefined comparisons in practical scenarios.
\end{itemize}

This work develops an information geometric framework for stable shape similarity measurement by representing each point cloud as a GMM defined on a low-dimensional latent space. The framework includes preprocessing, extraction of local descriptors, manifold embedding via Isomap, and GMM fitting through the Expectation–Maximization (EM) algorithm. With this representation, the collection of all such GMMs forms a statistical manifold, enabling the use of divergence-based tools for shape analysis.
We introduce the Modified Symmetric KL (MSKL) divergence which addresses the instability issue of exisitng KL approaximations. The theoretical properties, including bounds and stability for MSKL are established in the subsequent sections. In addition, we consider two well-known approximations for KL-divergence on GMMs—the weighted-average ($KL_{\text{WA}}$) and matching-based (KL$_{\text{MB}}$) which provide complementary baselines frequently used in mixture model comparison.

The main contributions of this work are:
\begin{itemize}
    \item Established a rigorous mathematical framework demonstrating that point clouds represented as GMMs form a statistical manifold, enabling the use of information-geometric methods.
    \item Introduced the MSKL divergence and prove that it is well defined by establishing upper and lower bounds, providing theoretical guarantee for GMM-based shape comparison.
    \item Compared the performance of MSKL with symmetric KL-based approximations (SKL\(_\text{WA}\), SKL\(_\text{MB}\)) and classical geometric distances (Hausdorff, Chamfer).
\end{itemize}

To demonstrate the practical performance of the framework, we compare MSKL with Symmetric KL divergence (SKL), SKL\(_\text{WA}\), SKL\(_\text{MB}\) and the traditional distances  Hausdorff, and Chamfer across two shape categories: (i) human shapes from the MPI FAUST dataset \cite{mpi_faust_dataset} and (ii) animal shapes from the G-PCD dataset \cite{epfl_geometry_point_cloud_dataset}. These experiments highlight how statistical manifold representations capture both local geometry and global structure more effectively than metrics.

\section{Related Works}\label{sec:related}
In this section, we review existing methods for comparing point clouds, which can be broadly classified into two categories: metric-based distances and divergence-based statistical approaches.

\subsection{Metric-Based Approaches}
\paragraph{1. Hausdorff Distance.}
For two finite points $P,Q\in \mathbb{R}^{m}$ the Hausdorff distance \cite{hausdorff2008gesammelte} is defined as

\begin{equation}
D_H(P, Q) = \max\left\{\sup_{p \in P} \inf_{q \in Q} \| p - q \|, \sup_{q \in Q} \inf_{p \in P} \| p - q \|\right\}.
\end{equation}

This measure emphasizes the largest difference between two sets of 3D points, which makes it particularly sensitive to unusual points that do not fit in the pattern.

\paragraph{2. Chamfer Distance.}
The Chamfer distance \cite{yang2018} provides an alternative that averages point-to-point distances, thereby reducing the impact of outliers. It is widely used in 3D reconstruction and mesh generation.

\begin{equation}
Ch(P, Q) = \frac{1}{|P|} \sum_{p \in P} \min_{q \in Q} \| p - q \| + \frac{1}{|Q|} \sum_{q \in Q} \min_{p \in P} \| p - q \|.
\end{equation}

This method calculates the average of the squared distances between each point in one set to its nearest point in the other set, offering a balance between sensitivity to large discrepancies and outliers. In this method differences in point cloud shape and distribution is not taken care of.

\paragraph{3. Earth Mover's Distance.}
The Earth Mover's Distance (EMD) formulates point cloud comparison as an optimal transport problem:
\begin{equation}
\text{EMD}(A, B) = \min_{\phi: A \leftrightarrow B} \sum_{a \in A} \| a - \phi(a) \|,
\end{equation}
where $\phi$ denotes a bijection between point clouds $A$ and $B$. While EMD provides a principled transport-based metric, it requires solving an expensive optimization problem that scales poorly with point cloud size ($O(n^3 \log n)$), making it impractical for large-scale applications \cite{rubner2000earth}.

\paragraph{4. Gromov-Hausdorff Distance.}
In \cite{hausdorff2008gesammelte} and \cite{yang2018} comparing two objects reduced to the comparison between the corresponding interpoint distance matrices in the multidimensional scaling. These methods primarily capture rigid similarities and do not account for isometric objects. To address this limitation Facundo Mémoli and Guillermo Sapiro \cite{memoli2004comparing} introduced a method based on the Gromov-Hausdorff distance, which provides a theoretical framework for isometric-invariant recognition.
The Gromov-Hausdorff distance, \(d_{GH}\) measures the similarity between two metric spaces \(X\) and \(Y\) by embedding them into a common metric space \(Z\) and minimizing the Hausdorff distance \(d_H^Z(X, Y)\). The Gromov-Hausdorff distance is defined as
\begin{equation}
d_{GH}(X, Y) = \inf_{Z, f, g} d_H^Z(f(X), g(Y)),
\end{equation}
where, $d_H^Z(f(X), g(Y)) = \max \left( \sup_{x \in f(X)} d_Z(x, g(Y)), \sup_{y \in g(Y)} d_Z(y, f(X)) \right).$
Here, \(f\) and \(g\) are isometric embeddings of \(X\) and \(Y\) into \(Z\), and \(d_Z\) is the intrinsic geodesic distance on \(Z\). Since there is no efficient way to directly compute the Gromov-Hausdorff distance they introduced a metric $d_J$,
\begin{equation}
d_{J}(X,Y) = \underset{\pi\in \mathcal{P}_{n}}{\min} \, \underset{1 \leq i, j \leq n}{\max} \frac{1}{2} \left| d_{X}(x_i, x_j) - d_{Y}(y_{\pi i}, y_{\pi j}) \right|,
\end{equation}

where $\mathcal{P}_{n}$ is the set of all permutations of $\{1,2,...,n\}$, $n$ is the number of points in the point cloud. This metric satisfies $d_{GH}\leq d_{J}$. The metric $d_{J}$ is computable and can be used to replace $d_{GH}$. 

The framework for comparing point clouds developed in \cite{memoli2004comparing} may face challenges with highly complex or higher-dimensional geometric structures due to computational constraints. 
% In this method, geodesic distances are approximated using techniques like the isomap algorithm, which relies on constructing a neighborhood graph from the nearest neighbors. While effective for simpler datasets, this approach can become computationally intensive for highly complex data, where the intrinsic geometric structures demand more nuanced distance calculations to accurately capture the manifold's geometry.
Beyond distance-based comparisons, recent approaches address robust point cloud processing using statistical methods. For instance, Gaussian process regression has been applied to point cloud denoising and outlier detection \cite{Kim2025}, demonstrating the broader landscape of statistical techniques for point cloud analysis.

\subsection{Divergence and Distance Based Measures on GMMs}
Recent works have explored representing point clouds as probability distributions, most commonly using GMMs. Several registration and comparison methods have been developed:

\begin{itemize}
    \item \textbf{GMM-based registration:} Jian and Vemuri \cite{jian2005robust} used Gaussian mixtures with $L^2$ distance for point set registration. Myronenko and Song \cite{song2010} developed the Coherent Point Drift algorithm, treating one point cloud as a GMM and aligning it to another through probabilistic correspondence.
    
    \item \textbf{Divergence-based methods:} Wang et al.~\cite{vemuri2006} introduced cumulative distribution functions with Jensen-Shannon divergence for groupwise registration. Hasanbelliu et al.~\cite{Giraldo2014} employed correntropy and Cauchy-Schwarz divergences. Li et al.~\cite{fuji2021} proposed novel bandwidth estimation strategies for GMM-based comparison.
\end{itemize}

While the above methods use GMMs for registration, they do not exploit the underlying geometric structure of the space of GMMs. Lee et al.~\cite{lee2022statistical} introduced a statistical manifold framework using kernel density estimation, defining parametric probability density functions as statistical representations of point clouds. In contrast, we interpret point clouds as samples from GMMs and endow the space of GMMs with a statistical manifold structure. This enables the use of information geometric tools such as divergences, connections, and curvature—for rigorous shape analysis. Our recent work \cite{vishwakarma2025} further embeds this statistical manifold into the manifold of symmetric positive definite matrices, enriching the geometric structure.

\section{Information Geometry}\label{sec3}
Information geometry studies non-Euclidean geometric structures on spaces of probability distributions, with applications to statistical inference and estimation as well as modern areas such as machine learning and deep learning. Metrics or divergences between probability distributions is important in practical
applications to look at similarity or dissimilarity among the given set of samples. A brief account of the geometry of statistical manifolds is given \cite{Amari2000MethodsOI}, \cite{harsha2014f}. 
For basic knowledge in differential geometry see \cite{lee2012smooth}.

\subsection{Statistical manifold}\label{sec:manifold}
The manifold structure and its geometry for a statistical model are discussed here. Let $(\mathcal{X}, \Sigma, p)$ be a probability space, where $\mathcal{X} \subseteq \mathbb{R}^{n}$. Consider a family $\mathcal{S}$ of probability distributions on $\mathcal{X}$. Suppose each element of $\mathcal{S}$ can be parametrized using $n$ real-valued variables $(\theta^{1},...,\theta^{n})$ so that 
\begin{equation}
\mathcal{S}=\lbrace p_{\theta}=p(x;\theta)\;\mid \;\theta=(\theta^{1},...,\theta^{n}) \in \Theta \rbrace
\end{equation}
where $\Theta$ is an open subset of $\mathbb{R}^{n}$ and the mapping $\theta \mapsto p_{\theta}$ is injective. The family $\mathcal{S}$ is called an $n$-dimensional \textbf{statistical model} or a\textbf{ parametric model}.

For a model $\mathcal{S}=\lbrace p_{\theta} \;\mid \; \theta \in \Theta \rbrace$, the mapping $\varphi:\mathcal{S}\longrightarrow \mathbb{R}^{n}$ defined by $\varphi(p_{\theta})=\theta$ allows us to consider $\varphi=(\theta^{i})$ as a coordinate system for $\mathcal{S}$. Suppose there is a $\mathit{c^{\infty}}$ diffeomorphism $\psi:\Theta\longrightarrow \psi(\Theta)$, where $\psi(\Theta)$ is an open subset of $\mathbb{R}^{n}$. Then, if we use $\rho=\psi(\theta)$ instead of $\theta$ as our parameter, we obtain $\mathcal{S}=\lbrace p_{\psi^{-1}(\rho)}\;\vert \; \rho \in \psi(\Theta) \rbrace$. This expresses the same family of probability distributions $\mathcal{S}=\lbrace p_{\theta} \rbrace$. If parametrizations which are $\mathit{c^{\infty}}$ diffeomorphic to each other is considered to be equivalent then $\mathcal{S}$ is a $\mathit{c^{\infty}}$ differentiable manifold, called the \textbf{statistical manifold} (see Figure ~\ref{fig:manifold})\cite{Amari2000MethodsOI,Amari2013,Nielsen2020}.

\begin{figure}[htbp]
    \centering
    \includegraphics[width=1.0\linewidth]{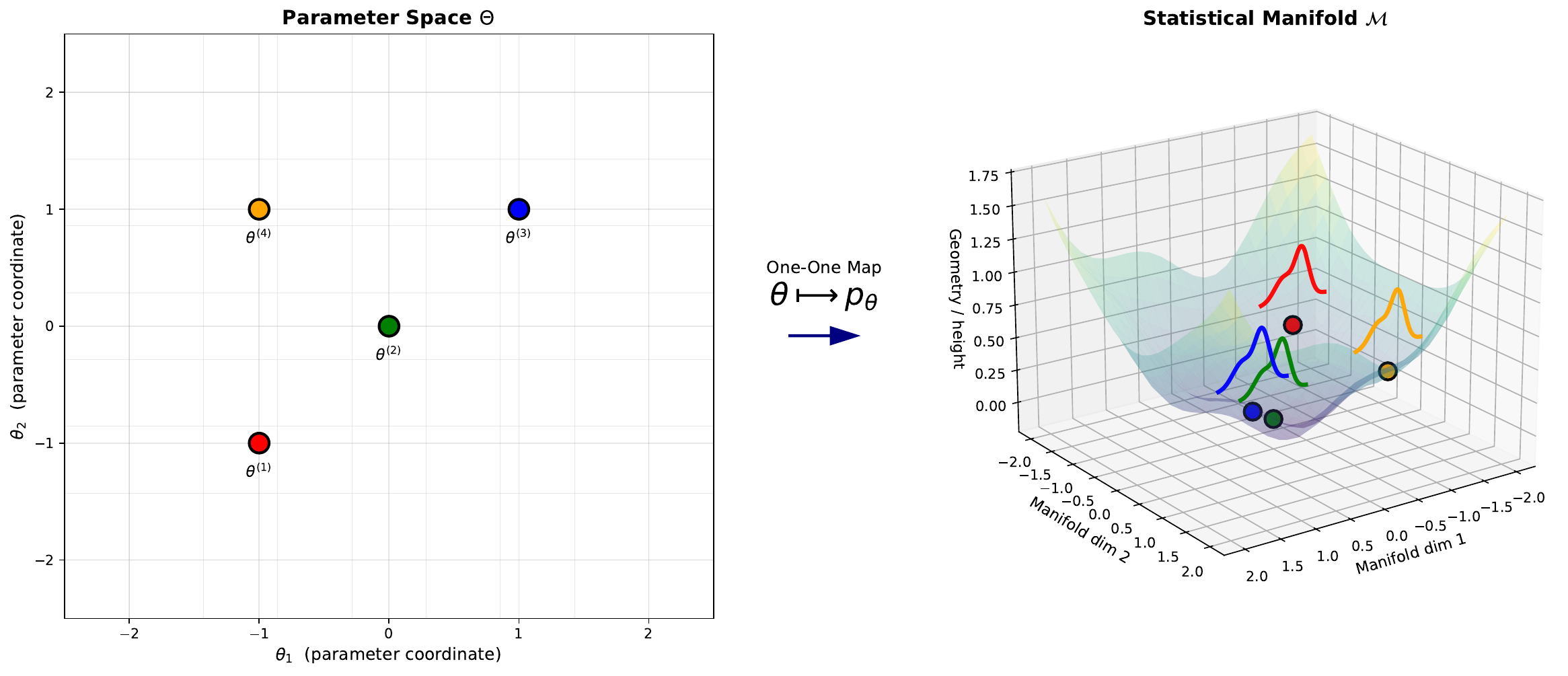}
    \caption{Statistical manifold representation shows the map
between the parameter space and the space of
probability distributions.}
    \label{fig:manifold}
\end{figure}

\subsection{Divergence Measures on Statistical Manifolds}\label{sec:unbounded}

Divergence is a distance-like measure between two points (probability density functions) on a statistical manifold. The \textbf{divergence} $D$ on $S$ is defined as $D=D(.||.):S \times S \to \mathbb{R}$, a smooth function satisfying, for any $p,q\in S$
$$D(p||q) \geq 0\text{ and }D(p||q)=0\text{ iff }p=q.$$

The \textbf{Kullback-Leibler (KL)} divergence is defined as \cite{kullback1951information,cover1991eit},
\begin{equation}
    D_{KL}(p || q) = \int p(x; \theta_1) \log\frac{p(x; \theta_1)}{q(x; \theta_2)} dx
\end{equation}
where $p(x; \theta_1)$ and $q(x; \theta_2)$ are probability density functions. In general, divergences are not symmetric. We modify the KL divergence to a symmetric one called the \textbf{Modified Symmetric KL divergence} $D_{MSKL}(p \parallel q)$, defined as
\begin{equation}
    D_{MSKL}(p \parallel q) = \frac{1}{2}\left[D_{KL}(\sqrt{p} \parallel \sqrt{q}) + D_{KL}(\sqrt{q} \parallel \sqrt{p})\right].
\end{equation}

The well-definedness of the MSKL divergence for GMMs will be established in Section~\ref{sec:bound}, where we derive upper and lower bounds. Note that MSKL differs from the standard symmetric KL divergence \cite{Amari2000MethodsOI} through the square-root transformation applied before symmetrization.

Since no closed-form expression exists for KL divergence between GMMs, we also consider two established approximations as baselines for comparison \cite{1238387,hershey2007approximating}. For GMMs $p(x) = \sum_{i=1}^K \alpha_i p_i(x)$ and $q(x) = \sum_{j=1}^L \beta_j q_j(x)$, where $p_i$ and $q_j$ denote Gaussian components with mixing weights $\alpha_i$ and $\beta_j$, the \textbf{Weighted Average approximation} leverages the convexity of KL divergence:
\begin{equation}
    KL_{WA}(p\|q) = \sum_{i=1}^K \sum_{j=1}^L \alpha_i \beta_j \, D_{KL}(p_i\|q_j).
\end{equation}
While computationally efficient, this provides a lower bound that can significantly underestimate the true divergence when mixture components are well-separated. The \textbf{Matching-Based approximation} assumes dominant contributions arise from the closest components:
\begin{equation}
    KL_{MB}(p\|q) = \sum_{i=1}^K \alpha_i \min_{j=1,\ldots,L}\left[ D_{KL}(p_i\|q_j) + \log\left(\frac{\alpha_i}{\beta_j}\right) \right].
\end{equation}
This provides an upper bound but becomes unreliable when components overlap.

The MSKL divergence offers several advantages over these existing approximations. 
\begin{enumerate}
\item \textbf{Bounded behavior}: As shown in Section~\ref{sec:bound}, MSKL admits upper and lower bounds for GMMs, whereas standard KL divergence can be unbounded.
\item \textbf{Robustness}: The square root transformation reduces sensitivity to outliers and provides better numerical stability, particularly important when comparing noisy point cloud data.
\end{enumerate}

\section{Statistical Manifold Representation for Point Cloud Data}
In this section, we establish a statistical manifold framework for point clouds dataset. Consider the dataset  $X$ with $n$ points in $\mathbb{R}^{m}$, i.e $X=\{x_{1},....,x_{n}\mid x_{i}\in \mathbb{R}^{m}\}$, assume that all the points are distinct. Denote the set of all point clouds by $Z.$

\subsection{Gaussian Mixture Model}
We model the point cloud \(X\subset\mathbb{R}^m\) as i.i.d. samples from a \(K\)-component Gaussian mixture with parameters
\(\Theta=\{(\mu_k,\Sigma_k,\pi_k)_{k=1}^K\}\), where \(\mu_k\in\mathbb{R}^m\), \(\Sigma_k\succ 0\), \(\pi_k\ge 0\), and \(\sum_{k=1}^K \pi_k=1\).
Each component has density
\[
\mathcal N(x;\mu,\Sigma)=\frac{1}{\sqrt{(2\pi)^m|\Sigma|}}
\exp\!\left(-\tfrac12(x-\mu)^\top \Sigma^{-1}(x-\mu)\right),
\]
and the mixture is
\[
p(x;\theta)=\sum_{k=1}^K \pi_k\,\mathcal N(x;\mu_k,\Sigma_k).
\]
The density \(p(\cdot;\theta)\) is the statistical representation of \(X\) \cite{reynolds2009gmm}.

\subsection{Manifold Structure for Point Clouds}

Consider the space $Z$ of point clouds, where each point cloud is represented as $X = \{x_1, \ldots, x_n\mid x_i \in \mathbb{R}^m\}$. Point cloud $X$ has a statistical representation $p(x;\theta)$ where the parameter $\theta$ varies over the parameter space $\Theta=\{\theta=(\mu_1,...,\mu_{K},\Sigma_{1},...,\Sigma_{K},\pi_{1},...,\pi_{K})\mid \mu_{i}\in \mathbb{R}^{m}, $ $ \hspace{0.2cm}\Sigma_{i} \text{ is an } m \times m $ symmetric matrix, $\Sigma_{i=1}^{K}\pi_{i}=1\}$. The set $S =\{p(x;\theta)\mid \theta\in \Theta\}$ representing the space $Z$ of point clouds is the statistical model representing $Z.$ Our aim is to give a geometric structure, called the statistical manifold, to the space $Z.$

\begin{theorem}
Let \small $p(x;\theta_1) =\sum_{i=1}^{K} \alpha_{i} \mathcal{N}(x ; \mu_{i}, \sigma_i^{2})$ and $p(x;\theta_2)=\sum_{j=1}^{L} \beta_{j} \mathcal{N}(x; \nu_{j}, \tau_j^{2})$ be two univariate Gaussian mixture models representing the point cloud $X$ having the number of Gaussian components $K$ and $L$ respectively.  Suppose that for all $x$ in a set with no upper bound,
$$p(x;\theta_1) = p(x;\theta_2).$$ If the parameters in each component of a GMM are distinct and they are ordered lexicographically by variance and mean, then $K=L$ and for each $i$, $\alpha_{i} = \beta_{i}$, $\mu_{i} = \nu_{i}$, and $\sigma_i^{2} = \tau_i^{2}$.
\end{theorem}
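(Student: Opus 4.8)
The plan is to reduce the statement to a tail-asymptotics argument about a single signed combination of Gaussians, avoiding any appeal to complex-analytic continuation. First I would form the difference $f(x) = p(x;\theta_1) - p(x;\theta_2)$ and rewrite it as a finite signed sum
\[
f(x) = \sum_{\ell=1}^{N} \gamma_\ell\, \mathcal{N}(x;m_\ell,s_\ell^2),
\]
where the pairs $(m_\ell,s_\ell^2)$ are pairwise distinct: each component of $p(\cdot;\theta_1)$ enters with coefficient $+\alpha_i$, each component of $p(\cdot;\theta_2)$ with $-\beta_j$, and a mean--variance pair occurring in both mixtures contributes the single coefficient $\alpha_i-\beta_j$. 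Since the mixing weights are strictly positive and, by hypothesis, the mean--variance pairs within each mixture are distinct, showing that every $\gamma_\ell=0$ is equivalent to the desired conclusion: it forces the two mixtures to have exactly the same components with exactly the same weights, hence $K=L$, and sorting both mixtures lexicographically by $(\text{variance},\text{mean})$ yields $\alpha_i=\beta_i$, $\mu_i=\nu_i$, and $\sigma_i^2=\tau_i^2$ for every $i$.

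The core step is to prove $\gamma_\ell=0$ for all $\ell$ by contradiction. Suppose some $\gamma_\ell\neq 0$, and among all indices with $\gamma_\ell\neq 0$ let $(s_\star^2,m_\star)$ be the lexicographically largest pair --- $s_\star^2$ the largest surviving variance and, among surviving components of that variance, $m_\star$ the largest mean --- with coefficient $\gamma_\star$. I would then divide $f$ by the strictly positive function $\mathcal{N}(x;m_\star,s_\star^2)$ and let $x\to+\infty$. A surviving term with $s_\ell^2<s_\star^2$ produces a ratio containing the factor $\exp\!\big(-\tfrac{x^2}{2}\big(\tfrac{1}{s_\ell^2}-\tfrac{1}{s_\star^2}\big)\big)\to 0$; a surviving term with $s_\ell^2=s_\star^2$ but $m_\ell<m_\star$ has its quadratic part cancel, leaving a factor $\exp\!\big(\tfrac{(m_\ell-m_\star)x}{s_\star^2}\big)\to 0$; by maximality no surviving term has $s_\ell^2>s_\star^2$; and the term $(m_\star,s_\star^2)$ itself contributes the constant $\gamma_\star$. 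Hence $f(x)/\mathcal{N}(x;m_\star,s_\star^2)\to\gamma_\star$ as $x\to+\infty$. Because the set where $p(\cdot;\theta_1)=p(\cdot;\theta_2)$ has no upper bound, it contains a sequence $x_n\to+\infty$ with $f(x_n)=0$, so $\gamma_\star=0$ --- a contradiction.

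I expect the only real obstacle to be careful bookkeeping in this core step: ensuring that the division-and-limit argument isolates exactly one dominant Gaussian, which is precisely why ordering by variance first and then mean is the correct tie-breaking rule, and why a set with no upper bound (rather than, say, a bounded set with an accumulation point) is the natural hypothesis. The remaining work --- translating ``all $\gamma_\ell=0$'' back into the stated parameter equalities via the within-mixture distinctness and the lexicographic indexing, and noting that positivity of the weights makes this an equivalence rather than a one-way implication --- is routine and would be dispatched briefly.
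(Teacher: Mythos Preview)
Your proposal is correct and follows essentially the same tail-asymptotics idea as the paper: divide by the dominant Gaussian (largest variance, then largest mean as tie-breaker) and let $x\to+\infty$ to isolate a single coefficient. The paper organizes this as an iterative peeling argument---match the top components, subtract, repeat---whereas you package it as a single contradiction on the signed difference; your version is in fact a bit more careful about the ``set with no upper bound'' hypothesis and about variance ties, but the underlying mechanism is the same.
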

\begin{proof}
Arrange the components of each GMM in increasing order of their variances. For $p(x;\theta_1)$ we have $\sigma_1^2 \leq \sigma_2^2 \leq \ldots \leq \sigma_K^2$, and for $p(x;\theta_2)$ we have $\tau_1^2 \leq \tau_2^2 \leq \ldots \leq \tau_L^2$. If $\sigma_i^2 = \sigma_j^2$, then $\mu_i \leq \mu_j$, and similarly if $\tau_i^2 =\tau_j^2$ then $\nu_i\leq \nu_j$.

For all $x$ in a set with no upper bound,

$$\sum_{i=1}^{K} \alpha_{i} \mathcal{N}(x;\mu_{i}, \sigma_{i}^{2}) = \sum_{j=1}^{L} \beta_j \mathcal{N}(x; \nu_{j}, \tau_{j}^{2}). $$

Consider the component with the largest variance in each GMMs. Without loss of generality, assume $\sigma_K^{2}$ and $\tau_L^{2}$ are the largest variances in their respective GMMs. Then,

$$ \lim_{x \to \infty} \frac{\sigma_{K}}{\mathcal{N}(x; \mu_{K}, \sigma_{K}^{2})} \sum_{i=1}^{K} \alpha_{i} \mathcal{N}(x ; \mu_{i}, \sigma_{i}^{2}) = \alpha_{K} $$
$$\lim_{x \to \infty} \frac{\tau_L}{\mathcal{N}(x ; \nu_L, \tau_L^2)} \sum_{j=1}^{L} \beta_j \mathcal{N}(x ; \nu_j, \tau_j^2) = \beta_L $$

Since the two GMMs are equal for all $x$, their limits must also be equal. That means $\alpha_K = \beta_L$ and $(\mu_K, \sigma_K^2) = (\nu_L, \tau_L^2)$. Then,

$$\sum_{i=1}^{K-1} \alpha_{i} \mathcal{N}(x; \mu_{i}, \sigma_{i}^{2}) = \sum_{j=1}^{L-1} \beta_{j} \mathcal{N}(x; \nu_{j}, \tau_{j}^{2}).$$

By repeating the above steps we conclude that $K = L$ and for each $1 \leq i \leq K$, the parameters are equal: $\alpha_{i} = \beta_{i}$, $\mu_{i} = \nu_{i}$, and $\sigma_{i}^{2} = \tau_{i}^{2}$.
\end{proof}
\textbf{Note:} In this paper, we work with the diagonal covariance matrices to reduce the computational complexity.
\begin{theorem}
The parameter space $$\Theta=\{\theta=(\mu_1,...,\mu_{k},\Sigma_{1},...,\Sigma_{K},\alpha_{1},...,\alpha_{K})\mid \mu_{i}\in \mathbb{R}^{m},$$ $$ \Sigma_{i} \text{ is an } m \times m \text{ diagonal covariance matrix, }\Sigma_{i=1}^{K}\alpha_{i}=1\}$$ for GMMs with $K$ components of $m$-dimensional Gaussian is a topological manifold.
\end{theorem}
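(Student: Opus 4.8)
\section*{Proof plan}

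The plan is to recognize $\Theta$ as a finite Cartesian product of three spaces, each of which is itself a topological manifold, and then to invoke the standard fact that a finite product of topological manifolds is again a topological manifold whose dimension is the sum of the dimensions of the factors \cite{lee2012smooth}. We equip $\Theta$ with its natural topology, namely the one it inherits as a subset of a Euclidean space (equivalently, the product topology under the decomposition below).

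First I would make the product decomposition explicit. Writing each diagonal covariance as $\Sigma_i=\mathrm{diag}(\sigma_{i1}^2,\ldots,\sigma_{im}^2)$ with $\sigma_{ij}^2>0$, and letting the mixing weights range over the open probability simplex $\Delta^{K-1}=\{(\alpha_1,\ldots,\alpha_K):\alpha_i>0,\ \sum_{i=1}^{K}\alpha_i=1\}$, one obtains the identification
\[
\Theta \;\cong\; \underbrace{(\mathbb{R}^m)^K}_{\text{means}}\;\times\;\underbrace{\big((0,\infty)^m\big)^K}_{\text{covariances}}\;\times\;\underbrace{\Delta^{K-1}}_{\text{weights}}.
\]
Since this realizes $\Theta$ as a subset of $\mathbb{R}^{Km}\times\mathbb{R}^{Km}\times\mathbb{R}^{K}$ with the subspace topology, it is automatically Hausdorff and second countable, so the only property left to establish is that it is locally Euclidean, which I would verify factor by factor.

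Next I would check each factor. The means factor $(\mathbb{R}^m)^K=\mathbb{R}^{Km}$ is Euclidean, hence a manifold of dimension $Km$. The covariance factor $\big((0,\infty)^m\big)^K=(0,\infty)^{Km}$ is an open subset of $\mathbb{R}^{Km}$, hence a manifold of dimension $Km$. For the weight factor I would exhibit the single global chart $\varphi:\Delta^{K-1}\to\mathbb{R}^{K-1}$, $\varphi(\alpha_1,\ldots,\alpha_K)=(\alpha_1,\ldots,\alpha_{K-1})$, which is a continuous bijection onto the open set $\{\beta\in\mathbb{R}^{K-1}:\beta_j>0,\ \sum_{j=1}^{K-1}\beta_j<1\}$ with continuous inverse $\beta\mapsto(\beta_1,\ldots,\beta_{K-1},\,1-\sum_{j=1}^{K-1}\beta_j)$; hence $\Delta^{K-1}$ is a manifold of dimension $K-1$. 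Taking products of these charts yields charts for $\Theta$, and the product manifold theorem then gives that $\Theta$ is a topological manifold of dimension $Km+Km+(K-1)=2Km+K-1$.

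The only subtle point — and the one I expect to be the main obstacle in keeping the argument clean — is the affine constraint $\sum_i\alpha_i=1$. One must be careful to take the \emph{open} simplex (strictly positive weights, which is also the natural domain on which the mixture genuinely has $K$ components); with a closed or half-open simplex the weight space would be a manifold with corners rather than a manifold, and the stated conclusion would fail. Once the open simplex is fixed, the projection chart $\varphi$ above eliminates the constraint and the remainder of the proof reduces to elementary point-set topology.
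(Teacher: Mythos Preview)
Your proposal is correct and follows essentially the same approach as the paper: both identify $\Theta$ with an open subset of $\mathbb{R}^{K(2m+1)-1}$ by counting the independent parameters (the $Km$ mean coordinates, the $Km$ diagonal variance entries, and the $K-1$ free mixing weights) and conclude that it is locally Euclidean with the inherited topology. Your version is more careful—the explicit product decomposition, the global chart on the open simplex, and the observation that one must take strictly positive weights to avoid a manifold-with-corners are all points the paper's brief proof glosses over.
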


\begin{proof} For the probability density function
$$ p(x; \theta) = \sum_{i=1}^{K} \alpha_i \mathcal{N}(x ; \mu_i, \Sigma_i),$$
each $\mu_i$ is an $m-$dimensional vector. The $m\times m$ diagonal covariance matrix $ \Sigma_i$ is symmetric, thus having $m$ distinct elements. The total number of mixing coefficients are $K$ and since $\sum_{i=1}^{K}\alpha_{i}=1,$ only $K-1$ are independent. So the total number of independent parameters in $\Theta$ is
$ \text{dim}(\Theta) = K(2m+1)-1.$ Note that  $\Theta$ is an open subset of  $\mathbb{R}^{dim (\Theta)}$, so $\Theta$ is a topological space with standard Euclidean topology. Also, around any point in $\Theta$ there exists a neighborhood that is homeomorphic to an open subset of $\mathbb{R}^{\text{dim}(\Theta)}$. This satisfies the local Euclidean condition for a manifold. Therefore, the parameter space $\Theta$ of m-dimensional GMMs is a topological manifold of dimension $K(2m+1)-1$.
\end{proof}
\begin{theorem}
The statistical model $S$ representing the space $Z$ of point clouds is a statistical manifold of dimension $K(2m+1)-1$.
\end{theorem}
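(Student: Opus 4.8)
The plan is to check that $S$ meets the requirements in the definition of a statistical manifold from Section~\ref{sec:manifold}: an open Euclidean parameter domain, an injective parametrization $\theta\mapsto p_\theta$, and a $C^\infty$ structure that is independent of the admissible reparametrization. Two of these are already in hand. Theorem~2 gives that $\Theta$ is an open subset of $\mathbb{R}^{K(2m+1)-1}$, hence a topological manifold covered by the single identity chart, of dimension $K(2m+1)-1$. Theorem~1 (identifiability), applied to each Gaussian component and extended from the univariate case to the $m$-dimensional diagonal-covariance setting by running the same tail-limit argument along each coordinate axis, shows that if $p(\cdot;\theta_1)=p(\cdot;\theta_2)$ on an unbounded set then, once the components are listed in the fixed lexicographic order (by variance, then mean) and assumed pairwise distinct, one has $\theta_1=\theta_2$. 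Thus $\varphi^{-1}\colon\theta\mapsto p_\theta$ is a bijection of $\Theta$ onto $S$, and $\varphi\colon S\to\Theta$, $\varphi(p_\theta)=\theta$, serves as a global coordinate chart.

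Next I would promote this to a smooth structure. The density $p(x;\theta)=\sum_{i=1}^{K}\alpha_i\,\mathcal{N}(x;\mu_i,\Sigma_i)$ is a $C^\infty$ function of $\theta$: each Gaussian factor is smooth in its mean and in the entries of a positive-definite diagonal covariance, and the constraint $\sum_i\alpha_i=1$ is eliminated by treating $\alpha_K=1-\sum_{i<K}\alpha_i$, leaving the free coordinates ranging over an open set. Declaring $\varphi$ a diffeomorphism therefore endows $S$ with a $C^\infty$ manifold structure of dimension $\dim\Theta=K(2m+1)-1$. Finally, if $\rho=\psi(\theta)$ is any other parametrization with $\psi$ a $C^\infty$ diffeomorphism of $\Theta$ onto an open subset of $\mathbb{R}^{K(2m+1)-1}$, it describes the same family $S$ and, by the equivalence of $C^\infty$-diffeomorphic parametrizations noted in Section~\ref{sec:manifold}, induces the same differentiable structure; hence the statistical manifold structure on $S$ is well defined.

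The main obstacle is the label-switching (permutation) symmetry of mixtures: because any reordering of the $K$ component triples leaves $p_\theta$ unchanged, the parametrization on the \emph{full} $\Theta$ is $K!$-to-one rather than injective, so the chart above is only legitimate after restricting to the subdomain on which the components are strictly distinct and lexicographically ordered. I would need to confirm that this subdomain is still open in $\mathbb{R}^{K(2m+1)-1}$ — which holds because one only deletes the closed, lower-dimensional ``collision'' loci where two component triples coincide — and that the ordering hypothesis of Theorem~1 is exactly what makes $\varphi$ single-valued there. The remaining steps (smoothness of the Gaussian in its parameters, the dimension count) are routine and inherited directly from Theorems~1 and~2.
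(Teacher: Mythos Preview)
Your outline is sound and in places more careful than the paper's own argument (the smoothness check and the explicit treatment of label switching are things the paper simply omits). The main methodological difference lies in how injectivity of $\theta\mapsto p_\theta$ is reduced to the univariate Theorem~1. The paper invokes the Cram\'er--Wold device: if $p(\cdot;\theta_1)=p(\cdot;\theta_2)$ as $m$-variate densities then their one-dimensional projections onto every direction $\ell\in\mathbb{R}^m$ agree, each projection is a univariate Gaussian mixture with mean $\ell^\top\mu_i$ and variance $\ell^\top\Sigma_i\ell$, and Theorem~1 applied for all $\ell$ recovers the full parameters. You instead propose to rerun the tail-limit argument coordinate-by-coordinate, exploiting that diagonal covariances factor the density over axes. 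This is more elementary but carries a small wrinkle you would need to address: the marginal (or axis-limit) along a single coordinate can collapse distinct $m$-variate components to identical one-dimensional ones, so Theorem~1's distinctness hypothesis may fail on a given axis; the Cram\'er--Wold route sidesteps this because a \emph{generic} direction $\ell$ separates all components simultaneously. In exchange, your attention to the permutation symmetry --- restricting to the open, strictly lex-ordered subdomain so that $\varphi$ is single-valued --- is a genuine refinement absent from the paper's proof.
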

\begin{proof}
Consider the map \( h: \Theta \rightarrow S \), defined by \( h(\theta) = p(x; \theta) \) for \( \theta \in \Theta \). Now to show that this map is injective.
Consider two GMMs with parameter sets \(\theta_1, \theta_2 \in \Theta\)
\[ p(x; \theta_1) = \sum_{i=1}^{K} \alpha_{i1} \mathcal{N}(x ; \mu_{i1}, \Sigma_{i1}) \]
and
\[ p(x; \theta_2) = \sum_{i=1}^{K} \alpha_{i2} \mathcal{N}(x ; \mu_{i2}, \Sigma_{i2}) \]
where \(\Sigma_{i1}\) and \(\Sigma_{i2}\) are diagonal matrices.

Assume \(p(x; \theta_1) = p(x; \theta_2)\) for all \(x\). By the Cramér-Wold theorem\cite{Cramr1936SomeTO}, this equality holds if and only if their projections onto any direction \(\ell \in \mathbb{R}^m\) are equal.

For any direction \(\ell\), the property of multivariate normal distributions ensures that if \(X \sim \mathcal{N}(\mu, \Sigma)\), then
\[ \ell^T X \sim \mathcal{N}(\ell^T\mu, \ell^T\Sigma\ell). \]

Therefore, projecting GMMs gives
\[ \sum_{i=1}^{K} \alpha_{i1} \mathcal{N}(\ell^T x ; \ell^T\mu_{i1}, \ell^T\Sigma_{i1}\ell) = \sum_{i=1}^{K} \alpha_{i2} \mathcal{N}(\ell^T x ; \ell^T\mu_{i2}, \ell^T\Sigma_{i2}\ell) .\]

These projected distributions are univariate GMMs, as each \(\ell^T x\) is scalar-valued. By Theorem 1, the equality of these univariate GMMs for all directions \(\ell\) implies
\[ \alpha_{i1} = \alpha_{i2}, \quad \ell^T\mu_{i1} = \ell^T\mu_{i2}, \quad \ell^T\Sigma_{i1}\ell = \ell^T\Sigma_{i2}\ell \]
for all \(i\) and all \(\ell\). Since this holds for all projection directions \(\ell\), we conclude that \(\theta_1 = \theta_2\), proving that \(h\) is injective.
The parameter space $\Theta$ is a topological manifold and the mapping $h:\Theta\to S$ is injective hence the statistical model representing the space $Z$ of point clouds can be viewed as a statistical manifold of dimension $dim(\Theta)=K(2m+1)-1.$
\end{proof}

\section{MSKL Bounds}\label{sec:bound}
In this section, we establish upper and lower bounds for the MSKL divergence between GMMs. These bounds are the 
theoretical foundation that ensures MSKL is well-defined and 
numerically stable for the GMM comparison, addressing the unboundedness 
issues of existing KL approximations discussed in Section~\ref{sec:unbounded}.

\begin{theorem}
\label{thm:distribution_free_lower}
Let $p,q$ be probability densities on $\mathbb{R}^m$. Then,
\[
D_{\mathrm{MSKL}}(p\|q) \geq S(p)\log S(p)+S(q)\log S(q)-(S(p)+S(q))\log A(p,q).
\]
Where, $S(p) ,S(q)$ and $A(p,q)$ denote $\int_{\mathbb{R}^m}\sqrt{p(x)}\,dx,$ $\int_{\mathbb{R}^m}\sqrt{q(x)}\,dx \text{ and } \int_{\mathbb{R}^{m}}(p(x)q(x))^{1/4}dx$ respectively.
We assume $0<S(p),S(q)<\infty$ and $0<A(p,q)<\infty.$ 

\end{theorem}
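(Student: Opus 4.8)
The plan is to reduce the inequality to the classical log-sum inequality (equivalently, Jensen's inequality for the convex function $\phi(t)=t\log t$), applied not to $p$ and $q$ directly but to their square roots $\sqrt p,\sqrt q$ measured against the ``geometric-mean'' integrand $(pq)^{1/4}$.

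First I would unwind the definitions. Since the paper defines $D_{KL}(f\|g)=\int_{\mathbb R^m} f\log(f/g)\,dx$, we have $D_{KL}(\sqrt p\|\sqrt q)=\int_{\mathbb R^m}\sqrt{p}\,\log\frac{\sqrt p}{\sqrt q}\,dx$, so that
\[
D_{MSKL}(p\|q)=\frac12\left[\int_{\mathbb R^m}\sqrt p\,\log\frac{\sqrt p}{\sqrt q}\,dx+\int_{\mathbb R^m}\sqrt q\,\log\frac{\sqrt q}{\sqrt p}\,dx\right].
\]
If the positive part of either integrand fails to be integrable the left side is $+\infty$ and there is nothing to prove, so I may assume both integrals are finite; I would also note that $0<S(p),S(q),A(p,q)<\infty$ is exactly what makes the claimed right-hand side a finite real number.

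The one elementary identity I need is $\log\frac{\sqrt p}{\sqrt q}=2\log\frac{\sqrt p}{(pq)^{1/4}}$ (and symmetrically with $p,q$ swapped), valid wherever $p,q>0$. Substituting this collapses the factor $\tfrac12$ and gives
\[
D_{MSKL}(p\|q)=\int_{\mathbb R^m}\sqrt p\,\log\frac{\sqrt p}{(pq)^{1/4}}\,dx+\int_{\mathbb R^m}\sqrt q\,\log\frac{\sqrt q}{(pq)^{1/4}}\,dx.
\]
Next I would invoke the integral form of the log-sum inequality: for measurable $f,g\ge 0$ with $0<\int f,\int g<\infty$, setting $d\nu=g\,dx/\!\int g$ and using convexity of $\phi(t)=t\log t$ via Jensen's inequality,
\[
\int f\log\frac fg\,dx=\Big(\textstyle\int g\Big)\int\phi\!\Big(\tfrac fg\Big)\,d\nu\ \ge\ \Big(\textstyle\int g\Big)\,\phi\!\Big(\tfrac{\int f}{\int g}\Big)=\Big(\textstyle\int f\Big)\log\frac{\int f}{\int g}.
\]
Applying this with $(f,g)=(\sqrt p,(pq)^{1/4})$ gives $\int\sqrt p\log\frac{\sqrt p}{(pq)^{1/4}}\ge S(p)\log\frac{S(p)}{A(p,q)}$, and with $(f,g)=(\sqrt q,(pq)^{1/4})$ the analogous bound for $q$. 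Adding the two and expanding the logarithm yields
\[
D_{MSKL}(p\|q)\ \ge\ S(p)\log S(p)+S(q)\log S(q)-(S(p)+S(q))\log A(p,q),
\]
which is the assertion.

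I expect the only real work to be measure-theoretic bookkeeping rather than the algebra: one must handle the sets where $p$ or $q$ vanishes with the convention $0\log 0=0$ used consistently, check that $(pq)^{1/4}$ is integrable so that $\nu$ is a genuine probability measure (this is precisely the hypothesis $A(p,q)<\infty$), and apply Jensen to an integrable integrand so that no $\infty-\infty$ indeterminacy appears (this is where $S(p),S(q)<\infty$ enter). As a closing remark I would record the equality case: strict convexity of $t\log t$ forces $\sqrt p$ and $\sqrt q$ to be proportional to $(pq)^{1/4}$ almost everywhere, i.e. $p=q$ a.e., consistent with $D_{MSKL}$ behaving as a genuine divergence.
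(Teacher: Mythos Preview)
Your argument is correct and arrives at the same bound as the paper, but the organization is genuinely different. The paper first normalizes $\sqrt p$ and $\sqrt q$ to probability densities $\alpha=\sqrt p/S(p)$ and $\beta=\sqrt q/S(q)$, splits each $D_{KL}(\sqrt p\|\sqrt q)$ into a ``scale'' term $S(p)\log(S(p)/S(q))$ plus $S(p)\,D_{KL}(\alpha\|\beta)$, and then invokes the Bhattacharyya-type inequality $D_{KL}(u\|v)\ge -2\log\int\sqrt{uv}$ on the normalized pieces; the algebra of recombining the four resulting terms is what produces the stated lower bound. Your route instead exploits the identity $\log(\sqrt p/\sqrt q)=2\log(\sqrt p/(pq)^{1/4})$ to absorb the factor $\tfrac12$ and to write $D_{\mathrm{MSKL}}$ as a sum of two ``KL-like'' integrals against the \emph{same} reference $(pq)^{1/4}$, after which a single application of the log-sum inequality to each term finishes the job. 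Your approach is shorter and avoids both the normalization step and the intermediate Bhattacharyya lemma; the paper's decomposition, on the other hand, isolates the normalized divergences $D_{KL}(\alpha\|\beta)$ explicitly, which may be of independent interest if one wants to plug in sharper bounds for those later. Both proofs ultimately rest on Jensen's inequality, just applied at different points in the algebra.
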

\begin{proof}
Denote $r_{p}=\sqrt{p(x})$ and $r_{q}=\sqrt{q(x)}$ and let $\alpha = \frac{r_p}{S(p)}$ and $\beta = \frac{r_q}{S(q)}$, so $\int \alpha = \int \beta = 1$. Then,
\begin{align}
D_{\mathrm{KL}}(r_p\|r_q) &= \int r_p \log\frac{r_p}{r_q} = \int S(p)\alpha \log\frac{S(p)\alpha}{S(q)\beta}\\
&= S(p)\log\frac{S(p)}{S(q)} + S(p) \cdot D_{\mathrm{KL}}(\alpha\|\beta)
\end{align}
Similarly for $D_{\mathrm{KL}}(r_q\|r_p)$. Thus,
\small 
{\begin{equation}
D_{\mathrm{MSKL}}(p\|q) = \frac{1}{2}\left[(S(p)-S(q))\log\frac{S(p)}{S(q)}\right] + \frac{1}{2}\left[S(p) \cdot D_{\mathrm{KL}}(\alpha\|\beta) + S(q) \cdot D_{\mathrm{KL}}(\beta\|\alpha)\right]
\end{equation}}
Using the bound $D_{\mathrm{KL}}(u\|v) \geq -2\log\int \sqrt{u(x)v(x)} dx$ and using
\begin{equation}
\int \sqrt{\alpha \beta} = \frac{\int (pq)^{1/4}}{\sqrt{S(p)S(q)}} = \frac{A(p,q)}{\sqrt{S(p)S(q)}}
\end{equation}
we obtain
\begin{align}
S(p) \cdot D_{\mathrm{KL}}(\alpha\|\beta) &\geq -2S(p)\log\left(\frac{A(p,q)}{\sqrt{S(p)S(q)}}\right)\\
S(q) \cdot D_{\mathrm{KL}}(\beta\|\alpha) &\geq -2S(q)\log\left(\frac{A(p,q)}{\sqrt{S(p)S(q)}}\right)
\end{align}
Substituting into (13) completes the proof.
\end{proof}

\begin{theorem}
\label{thm:S_upper_bound}
Let $p(x)=\sum_{i=1}^K \pi_i \phi_i(x)$ be a GMM with $\phi_i=\mathcal{N}(\mu_i,\Sigma_i)$, $\pi_i\geq 0$, $\sum_{i=1}^K\pi_i=1$. Then
$S(p) \leq \bar{S}(p).$
where, $\bar{S}(p)=(8\pi)^{m/4}\sum_{i=1}^K \sqrt{\pi_i}\,|\Sigma_i|^{1/4}.$
\end{theorem}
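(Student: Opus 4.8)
The plan is to bound $\sqrt{p}$ pointwise by a sum of square roots of the component densities, integrate term by term, and then evaluate the resulting Gaussian integrals in closed form. The single ingredient that makes this work is the elementary subadditivity of the square root: for nonnegative reals $a_1,\dots,a_K$ one has $\sqrt{a_1+\cdots+a_K}\le \sqrt{a_1}+\cdots+\sqrt{a_K}$ (immediate by squaring, since the cross terms are nonnegative). Applying this pointwise with $a_i=\pi_i\phi_i(x)$ gives
\[
\sqrt{p(x)}=\sqrt{\sum_{i=1}^K \pi_i\phi_i(x)}\;\le\;\sum_{i=1}^K\sqrt{\pi_i}\,\sqrt{\phi_i(x)},
\]
and integrating over $\mathbb{R}^m$ (all integrands nonnegative, so Tonelli lets us exchange sum and integral) yields $S(p)\le \sum_{i=1}^K\sqrt{\pi_i}\int_{\mathbb{R}^m}\sqrt{\phi_i(x)}\,dx$.

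Next I would compute $\int_{\mathbb{R}^m}\sqrt{\mathcal N(x;\mu,\Sigma)}\,dx$ explicitly. Writing out the density, $\sqrt{\mathcal N(x;\mu,\Sigma)}=(2\pi)^{-m/4}|\Sigma|^{-1/4}\exp\!\big(-\tfrac14(x-\mu)^\top\Sigma^{-1}(x-\mu)\big)$, so the task reduces to the Gaussian integral $\int_{\mathbb{R}^m}\exp\!\big(-\tfrac14(x-\mu)^\top\Sigma^{-1}(x-\mu)\big)\,dx$. Recognizing the exponent as $-\tfrac12(x-\mu)^\top(2\Sigma)^{-1}(x-\mu)$, this integral equals the normalizing constant of $\mathcal N(\mu,2\Sigma)$, namely $(2\pi)^{m/2}|2\Sigma|^{1/2}=(4\pi)^{m/2}|\Sigma|^{1/2}$. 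Combining,
\[
\int_{\mathbb{R}^m}\sqrt{\mathcal N(x;\mu,\Sigma)}\,dx=(2\pi)^{-m/4}|\Sigma|^{-1/4}(4\pi)^{m/2}|\Sigma|^{1/2}=(8\pi)^{m/4}|\Sigma|^{1/4},
\]
where the constant simplifies via $(4\pi)^{m/2}(2\pi)^{-m/4}=2^{3m/4}\pi^{m/4}=(8\pi)^{m/4}$.

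Substituting this back gives $S(p)\le (8\pi)^{m/4}\sum_{i=1}^K\sqrt{\pi_i}\,|\Sigma_i|^{1/4}=\bar S(p)$, which is the claim; note no diagonality or positivity-strictness assumptions beyond $\Sigma_i\succ 0$ and $\pi_i\ge 0$ are needed. There is no genuine obstacle here: the only place to be careful is the constant bookkeeping in the Gaussian integral (the powers of $2$ and $\pi$), and one should also remark that the bound is vacuous but still valid if $S(p)=\infty$ — in fact the argument shows $S(p)$ is always finite for a GMM, which is exactly the finiteness hypothesis $S(p)<\infty$ invoked in Theorem~\ref{thm:distribution_free_lower}.
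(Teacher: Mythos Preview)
Your proof is correct and follows essentially the same route as the paper: bound $\sqrt{p}$ pointwise by $\sum_i\sqrt{\pi_i}\sqrt{\phi_i}$, integrate, and evaluate $\int\sqrt{\phi_i}$ via the Gaussian normalizing constant of $\mathcal N(\mu_i,2\Sigma_i)$. Your labeling of the key inequality as subadditivity of the square root is in fact more accurate than the paper's ``concavity'' (concavity would give the reverse Jensen direction on a convex combination), and your tracking of the constants is slightly cleaner, but the argument is the same.
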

\begin{proof}
By concavity of square root,
\begin{equation}
\sqrt{\sum_{i=1}^K \pi_i \phi_i(x)} \leq \sum_{i=1}^K \sqrt{\pi_i} \sqrt{\phi_i(x)}.
\end{equation}
Integrating both sides,
\begin{equation}
S(p) = \int_{\mathbb{R}^m} \sqrt{p(x)} \, dx \leq \sum_{i=1}^K \sqrt{\pi_i} \int_{\mathbb{R}^m} \sqrt{\phi_i(x)} \, dx.
\end{equation}
For $\phi_i = \mathcal{N}(x;\mu_i, \Sigma_i)$,

\small{
\begin{align}
\int_{\mathbb{R}^m} \sqrt{\mathcal{N}(x; \mu_i, \Sigma_i)} \, dx &= \int_{\mathbb{R}^m} \left[\frac{1}{(2\pi)^{m/2}|\Sigma_i|^{1/2}} \exp\left(-\frac{1}{4}(x-\mu_i)^T\Sigma_i^{-1}(x-\mu_i)\right)\right] dx\\
&= \frac{1}{(2\pi)^{m/4}|\Sigma_i|^{1/4}} \int_{\mathbb{R}^m} \exp\left(-\frac{1}{4}y^T\Sigma_i^{-1}y\right) dy \quad (y = x - \mu_i)\\
&= \frac{1}{(2\pi)^{m/4}|\Sigma_i|^{1/4}} \cdot (4\pi)^{m/2}|\Sigma_i|^{1/2} = (8\pi)^{m/4}|\Sigma_i|^{1/4}
\end{align}
}
Substituting into (17) completes the proof.
\end{proof}

\begin{theorem}
\label{thm:A_upper_bound}
Let $p(x)=\sum_{i=1}^K \pi_i\phi_i(x)$ and $q(x)=\sum_{j=1}^L \rho_j\psi_j(x)$ be two GMMs on $\mathbb{R}^m$ with $\phi_i=\mathcal{N}(\mu_i,\Sigma_i)$, $\psi_j=\mathcal{N}(\nu_j,\Lambda_j)$, and mixing weights satisfying $\sum_i\pi_i=\sum_j\rho_j=1$. Then,
\[
A(p,q)\leq \bar{A}(p,q).
\]
Where, $\bar{A}(p,q)$ denotes $\sum_{i=1}^K\sum_{j=1}^L \pi_i^{1/4}\rho_j^{1/4} I_{ij}$ where, $I_{ij} = \int_{\mathbb{R}^m} \phi_i(x)^{1/4}\psi_j(x)^{1/4}\,dx$ can be computed in closed form as
\[
I_{ij} = (2\pi)^{m/4}\frac{|P_{ij}|^{-1/2}}{|\Sigma_i|^{1/8}|\Lambda_j|^{1/8}}\exp\left(-\tfrac{1}{2}Q_{ij}\right),
\]
with $P_{ij} = \tfrac{1}{4}(\Sigma_i^{-1}+\Lambda_j^{-1})$, $h_{ij} = \tfrac{1}{4}(\Sigma_i^{-1}\mu_i+\Lambda_j^{-1}\nu_j)$, $R_{ij} = \tfrac{1}{8}(\mu_i^\top\Sigma_i^{-1}\mu_i + \nu_j^\top\Lambda_j^{-1}\nu_j)$, and $Q_{ij} = 2R_{ij} - h_{ij}^\top P_{ij}^{-1}h_{ij}$.
\end{theorem}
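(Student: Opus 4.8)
The plan is to establish the bound $A(p,q)\le\bar A(p,q)$ by the same concavity argument used for $S(p)$ and then to evaluate the component integrals $I_{ij}$ explicitly as Gaussian integrals. First I would apply concavity of $t\mapsto t^{1/4}$ (subadditivity on the nonnegative reals): since $p(x)=\sum_i\pi_i\phi_i(x)$ with $\pi_i\ge 0$, we have $p(x)^{1/4}\le\sum_i\pi_i^{1/4}\phi_i(x)^{1/4}$, and likewise $q(x)^{1/4}\le\sum_j\rho_j^{1/4}\psi_j(x)^{1/4}$. Multiplying the two inequalities (all terms nonnegative) and integrating over $\mathbb{R}^m$ gives
\[
A(p,q)=\int_{\mathbb{R}^m}(p(x)q(x))^{1/4}\,dx\le\sum_{i=1}^K\sum_{j=1}^L\pi_i^{1/4}\rho_j^{1/4}\int_{\mathbb{R}^m}\phi_i(x)^{1/4}\psi_j(x)^{1/4}\,dx=\bar A(p,q),
\]
which is the asserted bound modulo the closed form for $I_{ij}$.

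Next I would compute $I_{ij}$. Writing out $\phi_i(x)^{1/4}\psi_j(x)^{1/4}$, the normalizing constants pull out as $(2\pi)^{-m/8}|\Sigma_i|^{-1/8}$ and $(2\pi)^{-m/8}|\Lambda_j|^{-1/8}$, and the exponent becomes $-\tfrac18(x-\mu_i)^\top\Sigma_i^{-1}(x-\mu_i)-\tfrac18(x-\nu_j)^\top\Lambda_j^{-1}(x-\nu_j)$. Expanding and collecting terms in $x$ puts this in the standard quadratic form $-(x^\top P_{ij}x-2h_{ij}^\top x+2R_{ij})$ with $P_{ij}$, $h_{ij}$, $R_{ij}$ as defined in the statement. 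Completing the square, $x^\top P_{ij}x-2h_{ij}^\top x = (x-P_{ij}^{-1}h_{ij})^\top P_{ij}(x-P_{ij}^{-1}h_{ij})-h_{ij}^\top P_{ij}^{-1}h_{ij}$, so the integral reduces to a shifted centered Gaussian integral $\int_{\mathbb{R}^m}\exp(-y^\top P_{ij}y)\,dy=\pi^{m/2}|P_{ij}|^{-1/2}$ (valid since $P_{ij}=\tfrac14(\Sigma_i^{-1}+\Lambda_j^{-1})\succ 0$), times $\exp(-(2R_{ij}-h_{ij}^\top P_{ij}^{-1}h_{ij}))=\exp(-Q_{ij})$. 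Folding together the prefactors $(2\pi)^{-m/4}|\Sigma_i|^{-1/8}|\Lambda_j|^{-1/8}\cdot\pi^{m/2}|P_{ij}|^{-1/2}$ and simplifying the powers of $2$ and $\pi$ yields $I_{ij}=(2\pi)^{m/4}|P_{ij}|^{-1/2}|\Sigma_i|^{-1/8}|\Lambda_j|^{-1/8}\exp(-\tfrac12 Q_{ij})$ — here one must track whether the exponent is $\exp(-Q_{ij})$ or $\exp(-\tfrac12 Q_{ij})$; reconciling this with the stated $R_{ij}=\tfrac18(\cdots)$ and $Q_{ij}=2R_{ij}-h_{ij}^\top P_{ij}^{-1}h_{ij}$ is just a matter of consistently carrying the factor $\tfrac18$ versus $\tfrac14$ through the exponent, and I would present the computation in a way that makes the bookkeeping transparent (e.g.\ by first absorbing a factor of $2$ so the exponent reads $-\tfrac12(\text{quadratic})$).

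The only real obstacle is this constant-chasing in the Gaussian integral: getting the powers of $2$, $\pi$, and the determinants exactly right, and making sure the quadratic-form identities for $P_{ij}$, $h_{ij}$, $R_{ij}$, $Q_{ij}$ are stated with the precise coefficients that make $I_{ij}$ come out as claimed. There is no conceptual difficulty — positive-definiteness of $P_{ij}$ is immediate from positive-definiteness of $\Sigma_i^{-1}$ and $\Lambda_j^{-1}$, so the Gaussian integral always converges and $\bar A(p,q)<\infty$. I would therefore structure the proof as: (i) the subadditivity/integration step giving $A(p,q)\le\bar A(p,q)$; (ii) reduction of $I_{ij}$ to a centered Gaussian integral by completing the square; (iii) evaluation and simplification of constants. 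Step (i) is the substantive inequality; steps (ii)–(iii) are routine but deserve careful exposition so the reader can verify the closed form.
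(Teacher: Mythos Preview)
Your proposal is correct and follows essentially the same route as the paper: apply subadditivity of $t\mapsto t^{1/4}$ to each mixture, multiply and integrate to obtain $A(p,q)\le\bar A(p,q)$, then compute $I_{ij}$ by expanding the product of Gaussian quarter-powers, completing the square, and evaluating the resulting Gaussian integral. The factor-of-$2$ slip you flag is indeed only bookkeeping---the paper writes the exponent as $-\tfrac12 x^\top P_{ij}x + h_{ij}^\top x - R_{ij}$ (not $-(x^\top P_{ij}x-2h_{ij}^\top x+2R_{ij})$), uses $\int\exp(-\tfrac12 y^\top P_{ij}y)\,dy=(2\pi)^{m/2}|P_{ij}|^{-1/2}$, and the $\exp(-\tfrac12 Q_{ij})$ then falls out directly.
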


\begin{proof}
Jensen's inequality gives
\[
p(x)^{1/4} \leq \sum_i \pi_i^{1/4}\phi_i(x)^{1/4}, \quad q(x)^{1/4} \leq \sum_j \rho_j^{1/4}\psi_j(x)^{1/4}.
\]
Therefore,
\small{
\[
A(p,q) = \int (p(x)q(x))^{1/4}dx \leq \sum_{i,j} \pi_i^{1/4}\rho_j^{1/4}\int \phi_i(x)^{1/4}\psi_j(x)^{1/4}dx = \sum_{i,j} \pi_i^{1/4}\rho_j^{1/4} I_{ij}.
\]
}

To compute $I_{ij}$, We have
\[
\begin{aligned}
\phi_i(x)^{1/4}\psi_j(x)^{1/4} 
&= \frac{1}{(2\pi)^{m/4}|\Sigma_i|^{1/8}|\Lambda_j|^{1/8}} \\
&\quad \times \exp\!\left(-\tfrac{1}{8}(x-\mu_i)^\top\Sigma_i^{-1}(x-\mu_i)
-\tfrac{1}{8}(x-\nu_j)^\top\Lambda_j^{-1}(x-\nu_j)\right).
\end{aligned}
\]

Expanding the quadratic forms and collecting terms, this equals
\[
\frac{1}{(2\pi)^{m/4}|\Sigma_i|^{1/8}|\Lambda_j|^{1/8}}\exp\left(-\frac{1}{2}x^\top P_{ij} x + x^\top h_{ij} - R_{ij}\right)
\]
where $P_{ij} = \frac{1}{4}(\Sigma_i^{-1}+\Lambda_j^{-1})$, $h_{ij} = \frac{1}{4}(\Sigma_i^{-1}\mu_i+\Lambda_j^{-1}\nu_j)$, and $R_{ij} = \frac{1}{8}(\mu_i^\top\Sigma_i^{-1}\mu_i + \nu_j^\top\Lambda_j^{-1}\nu_j)$.
We have $-\frac{1}{2}x^\top P_{ij} x + x^\top h_{ij} = -\frac{1}{2}(x-P_{ij}^{-1}h_{ij})^\top P_{ij}(x-P_{ij}^{-1}h_{ij}) + \frac{1}{2}h_{ij}^\top P_{ij}^{-1}h_{ij}$.

Substituting this back and integrating we get

\[
I_{ij} = \int_{\mathbb{R}^m} \phi_i(x)^{1/4}\psi_j(x)^{1/4}dx
\]
\[
\begin{aligned}
&= \frac{1}{(2\pi)^{m/4}|\Sigma_i|^{1/8}|\Lambda_j|^{1/8}}
   \exp\!\left(\tfrac{1}{2}h_{ij}^\top P_{ij}^{-1}h_{ij} - R_{ij}\right) \\
&\quad \int_{\mathbb{R}^m} 
   \exp\!\left(-\tfrac{1}{2}(x-P_{ij}^{-1}h_{ij})^\top P_{ij}(x-P_{ij}^{-1}h_{ij})\right)\,dx.
\end{aligned}
\]

The Gaussian integral evaluates to $(2\pi)^{m/2}|P_{ij}|^{-1/2}.$ Then,
\[
I_{ij} = \frac{1}{(2\pi)^{m/4}|\Sigma_i|^{1/8}|\Lambda_j|^{1/8}}\exp\left(\frac{1}{2}h_{ij}^\top P_{ij}^{-1}h_{ij} - R_{ij}\right) \cdot (2\pi)^{m/2}|P_{ij}|^{-1/2}
\]

which gives
\[
I_{ij} 
= (2\pi)^{m/4}\frac{|P_{ij}|^{-1/2}}{|\Sigma_i|^{1/8}|\Lambda_j|^{1/8}}\exp\left(\frac{1}{2}h_{ij}^\top P_{ij}^{-1}h_{ij} - R_{ij}\right).
\]

Since $(2\pi)^{m/4} = 2^{m/4}\pi^{m/4}$ and $|P_{ij}| = 4^{-m}|\Sigma_i^{-1}+\Lambda_j^{-1}|$, we have $|P_{ij}|^{-1/2} = 2^m|\Sigma_i^{-1}+\Lambda_j^{-1}|^{-1/2}$. Let $Q_{ij} = 2R_{ij} - h_{ij}^\top P_{ij}^{-1}h_{ij}$. This gives the stated formula for $I_{ij}$.
\end{proof}

\begin{theorem}
\label{thm:computable_lower_bound}
Let $p,q$ be GMMs. Let $\bar{S}(p),\bar{S}(q)$ be the bounds from Theorem~\ref{thm:S_upper_bound} and let $\bar{A}(p,q)$ be the bound from Theorem~\ref{thm:A_upper_bound}. Then,
\[
D_{\mathrm{MSKL}}(p\|q)\;\ge\max\!\left\{0,\;S(p)^*\log\frac{S(p)^*}{\bar{A}(p,q)}+S(q)^*\log\frac{S(q)^*}{\bar{A}(p,q)}\right\}.
\]

Where,
\[
S(p)^*:=\min\!\left\{\bar{S}(p),\frac{\bar{A}(p,q)}{e}\right\},\qquad
S(q)^*:=\min\!\left\{\bar{S}(q),\frac{\bar{A}(p,q)}{e}\right\}.
\] respectively.

\end{theorem}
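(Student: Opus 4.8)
The plan is to derive the computable bound from the distribution-free inequality of Theorem~\ref{thm:distribution_free_lower} by replacing, one at a time, the intractable quantities $S(p),S(q),A(p,q)$ with the closed-form surrogates $\bar S(p),\bar S(q),\bar A(p,q)$ supplied by Theorems~\ref{thm:S_upper_bound} and~\ref{thm:A_upper_bound}. Write $\Phi(a,b,c)=a\log(a/c)+b\log(b/c)$, so that Theorem~\ref{thm:distribution_free_lower} reads $D_{\mathrm{MSKL}}(p\|q)\ge\Phi(S(p),S(q),A(p,q))$. First I would note that for GMMs the hypotheses of that theorem hold: $S(p),S(q)$ are finite by Theorem~\ref{thm:S_upper_bound}, $A(p,q)$ is finite by Theorem~\ref{thm:A_upper_bound}, and all three are strictly positive, being integrals of strictly positive functions.

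Next I would eliminate $A(p,q)$ by monotonicity. For fixed $a,b>0$ the map $c\mapsto\Phi(a,b,c)$ is strictly decreasing, since $\partial\Phi/\partial c=-(a+b)/c<0$. As $A(p,q)\le\bar A(p,q)$ by Theorem~\ref{thm:A_upper_bound}, this gives
\[
D_{\mathrm{MSKL}}(p\|q)\;\ge\;\Phi(S(p),S(q),A(p,q))\;\ge\;\Phi(S(p),S(q),\bar A(p,q))=S(p)\log\frac{S(p)}{\bar A(p,q)}+S(q)\log\frac{S(q)}{\bar A(p,q)}.
\]

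Then I would treat the two remaining terms separately by a one-variable optimization. Put $h(s)=s\log(s/\bar A(p,q))$ on $(0,\infty)$; then $h'(s)=1+\log(s/\bar A(p,q))$ vanishes only at $s=\bar A(p,q)/e$, and $h''(s)=1/s>0$, so $h$ is strictly convex with unique global minimizer $\bar A(p,q)/e$. Since $S(p)\in(0,\bar S(p)]$ by Theorem~\ref{thm:S_upper_bound}, I would minimize $h$ over that interval: if $\bar S(p)\le\bar A(p,q)/e$ then $h$ is decreasing on $(0,\bar S(p)]$ and its minimum is $h(\bar S(p))$, while if $\bar S(p)>\bar A(p,q)/e$ the minimum is the interior value $h(\bar A(p,q)/e)$; in either case the minimum equals $h(S(p)^*)$ with $S(p)^*=\min\{\bar S(p),\bar A(p,q)/e\}$. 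Hence $h(S(p))\ge h(S(p)^*)$ and similarly $h(S(q))\ge h(S(q)^*)$, which combined with the previous display yields $D_{\mathrm{MSKL}}(p\|q)\ge S(p)^*\log\frac{S(p)^*}{\bar A(p,q)}+S(q)^*\log\frac{S(q)^*}{\bar A(p,q)}$.

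Finally I would combine this with the nonnegativity $D_{\mathrm{MSKL}}(p\|q)\ge 0$, read off from the decomposition in the proof of Theorem~\ref{thm:distribution_free_lower}: its first summand $\tfrac12(S(p)-S(q))\log(S(p)/S(q))$ is a product of like-signed factors, and its second summand is a nonnegative combination of genuine KL divergences between the normalized densities $\alpha=\sqrt{p}/S(p)$ and $\beta=\sqrt{q}/S(q)$. Taking the larger of $0$ and the displayed quantity gives the claim. The step I expect to be the crux is the one-variable optimization: because $h$ is \emph{not} monotone, one cannot simply substitute $\bar S(p)$ for $S(p)$, and the clamp to $\bar A(p,q)/e$ is precisely what keeps the substitution valid while preserving computability.
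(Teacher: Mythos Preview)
Your proposal is correct and follows essentially the same route as the paper: start from the distribution-free inequality, use monotonicity in the third argument to replace $A(p,q)$ by $\bar A(p,q)$, then minimize each separable term $s\mapsto s\log(s/\bar A(p,q))$ over $(0,\bar S]$ via the convexity/critical-point analysis, and finally intersect with nonnegativity. The only differences are cosmetic---your $\Phi,h$ versus the paper's $F,g$---and your slightly more explicit justification of the hypotheses and of $D_{\mathrm{MSKL}}\ge 0$.
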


\begin{proof}
By Theorem~\ref{thm:distribution_free_lower},
\small{
\begin{equation}\label{eq:base-lb}
D_{\mathrm{MSKL}}(p\|q)\ge
S(p)\log S(p)+S(q)\log S(q)-(S(p)+S(q))\log A.
\end{equation}
}
Let us denote $S(p)\log S(p)+S(q)\log S(q)-(S(p)+S(q))\log A$ by $F(S(p),S(q),A).$ 
We have,
\[
0<S(p)\le \bar{S}(p),\qquad 0<S(q)\le \bar{S}(q),\qquad 0<A(p,q)\le \bar{A}(p,q).
\]

For fixed positive $S(p),S(q)$,
\[
\frac{\partial}{\partial A}F(S(p),S(q),A)=-\frac{S(p)+S(q)}{A}<0\qquad(A>0).
\]
Hence $F(S(p),S(q),A)\ge F(S(p),S(q),a)$ for all $A\in(0,a]$ where $a$ is the number $\bar{A}(p,q)$. Therefore, from \eqref{eq:base-lb},
\begin{equation}\label{eq:reduced-A}
D_{\mathrm{MSKL}}(p\|q)\;\ge\;F(S(p),S(q),a)
= \big[S(p)\log S(p) - S(p)\log a\big]+\big[S(q)\log S(q) - S(q)\log a\big].
\end{equation}

Define, for fixed $a>0$,
\[
g(S):=S\log S - S\log a\qquad (S>0).
\]
Then, $g'(S)=\log S + 1 - \log a$ and $g''(S)=1/S>0$, so $g$ is strictly convex on $(0,\infty)$ with unique stationary point at $g'(S)=0$, i.e.
\[
\log\!\Big(\tfrac{S}{a}\Big)=-1 \;\Longleftrightarrow\; S=\tfrac{a}{e}.
\]
Consequently, $S=\tfrac{a}{e}$ is the global minimizer of $g$ on $(0,\infty)$.

Now for the constraint $0<S\le \bar{S}:$
\begin{itemize}
\item If $\bar{S}\ge a/e$, then $a/e$ is admissible and for all $S\in(0,\bar{S}]$,
$g(S)\ge g(a/e)$.
\item If $\bar{S}< a/e$, then for $S\in(0,\bar{S}]$ we have $S/a<1/e$, hence
$g'(S)=\log(S/a)+1<0$, so $g$ is strictly decreasing on $(0,\bar{S}]$ and
$g(S)\ge g(\bar{S})$.
\end{itemize}
In both the cases,
\[
g(S)\;\ge\;g\!\left(\min\!\left\{\bar{S},\tfrac{a}{e}\right\}\right).
\]
Applying this with $(S,\bar{S})=(S(p),\bar{S}(p))$ and  with $(S,\bar{S})=(S(q),\bar{S}(q))$ gives
\[
\begin{aligned}
S(p)\log S(p) - S(p)\log a
&\;\ge\;
S_p^*\log S_p^* - S_p^*\log a,\\[4pt]
S(q)\log S(q) - S(q)\log a
&\;\ge\;
S(q)^*\log S(q)^* - S(q)^*\log a.
\end{aligned}
\]

% where
% \(
% S(p)^*=\min\{\bar{S}(p),a/e\}
% \)
% and
% \(
% S(q)^*=\min\{\bar{S}(q),a/e\}.
% \)

Combining these two inequalities with \eqref{eq:reduced-A} gives
\[
D_{\mathrm{MSKL}}(p\|q)\;\ge\; S(p)^*\log\frac{S(p)^*}{a}+S(q)^*\log\frac{S(q)^*}{a}
= S(p)^*\log\frac{S(p)^*}{\bar{A}(p,q)}+S(q)^*\log\frac{S(q)^*}{\bar{A}(p,q)}.
\]

Since $D_{\mathrm{MSKL}}(p\|q)\ge 0$ always, we may write
\[
D_{\mathrm{MSKL}}(p\|q)\;\ge\;\max\!\left\{0,\; S(p)^*\log\frac{S(p)^*}{\bar{A}(p,q)}+S(q)^*\log\frac{S(q)^*}{\bar{A}(p,q)}\right\}.
\]
\end{proof}

\subsection{Upper Bound via the Bhattacharyya Coefficient}

\begin{theorem}
\label{thm:upper_bounds_mskl}
Let $p$ and $q$ be GMMs. Then, for any $\epsilon>0$ there exists a real number $T(p,q)$ depending on the mixture parameters such that
\[
D_{\mathrm{MSKL}}(p\|q)\ \le\ -\frac{\bar S(p)+\bar S(q)}{2}\,\log\big(\max\{T(p,q),\epsilon\}\big).
\]
\end{theorem}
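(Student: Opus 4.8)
The plan is to bound $D_{\mathrm{MSKL}}(p\|q)$ from above by relating it to the Bhattacharyya-type overlap integral $A(p,q)$, which already appears in the lower bound. The starting point is equation~(13) from the proof of Theorem~\ref{thm:distribution_free_lower}, which expresses $D_{\mathrm{MSKL}}(p\|q)$ as a sum of a ``normalization mismatch'' term $\tfrac12(S(p)-S(q))\log\tfrac{S(p)}{S(q)}$ and a ``symmetrized KL between normalized densities'' term $\tfrac12[S(p)D_{\mathrm{KL}}(\alpha\|\beta)+S(q)D_{\mathrm{KL}}(\beta\|\alpha)]$. Unlike the lower bound, the upper bound cannot use the simple inequality $D_{\mathrm{KL}}(u\|v)\ge -2\log\int\sqrt{uv}$; instead I would invoke a known \emph{reverse} bound: when the densities $u,v$ have bounded likelihood ratio (which holds for the square-roots of GMMs on any compact region and, via the Gaussian tails, can be controlled globally up to an $\epsilon$-truncation), one has $D_{\mathrm{KL}}(u\|v)\le C\cdot(-\log\int\sqrt{uv})$ for a constant $C$ expressible through the mixture parameters. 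This is exactly where the quantity $T(p,q)$ enters: it is (up to the $\epsilon$-floor) the smallest value attained by a normalized Bhattacharyya overlap, and the $\max\{T(p,q),\epsilon\}$ guards against the overlap integral vanishing.

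First I would rewrite $D_{\mathrm{KL}}(\alpha\|\beta)$ and $D_{\mathrm{KL}}(\beta\|\alpha)$ in terms of $-\log\int\sqrt{\alpha\beta}$ using the reverse-Pinsker / bounded-ratio estimate, picking up the factor involving the essential sup and inf of $\alpha/\beta$; because $\alpha=\sqrt{p}/S(p)$ and $\beta=\sqrt{q}/S(q)$ with $p,q$ GMMs, these ratios are ratios of finite sums of Gaussian square-roots, whose extrema over $\mathbb{R}^m$ are either attained or approached in the tails. Restricting to the region where $p,q\ge\epsilon'$ for a small $\epsilon'$ tied to the given $\epsilon$ makes the ratio bound finite, and the discarded tail contributes at most a controlled amount; collecting everything, the ratio-dependent constants get absorbed into the definition of $T(p,q)$. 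Next I would substitute the bound $\int\sqrt{\alpha\beta}=A(p,q)/\sqrt{S(p)S(q)}$ from equation~(12) and use $S(p)\le\bar S(p)$, $S(q)\le\bar S(q)$ from Theorem~\ref{thm:S_upper_bound} to replace the prefactors $S(p),S(q)$ by $\bar S(p),\bar S(q)$ — noting that replacing $S(p)$ by its upper bound is valid for the $\log$ terms only when the logarithm is nonpositive, which is arranged by the $\max\{T(p,q),\epsilon\}$ clamp ensuring the relevant overlap quantity is $\le 1$. Finally I would handle the mismatch term $\tfrac12(S(p)-S(q))\log\tfrac{S(p)}{S(q)}$, which is always nonnegative, by absorbing it into the overall constant $T(p,q)$ as well (it too is bounded in terms of $\bar S(p),\bar S(q)$), yielding the clean form $-\tfrac{\bar S(p)+\bar S(q)}{2}\log(\max\{T(p,q),\epsilon\})$.

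The main obstacle I anticipate is making the reverse KL bound rigorous globally on $\mathbb{R}^m$: the inequality $D_{\mathrm{KL}}(u\|v)\le(\text{const})\cdot(-\log\int\sqrt{uv})$ genuinely fails without a bound on $\sup u/v$, and for GMMs that supremum can be infinite when one component has strictly smaller variance than all components of the other mixture (so the square-root likelihood ratio blows up in the tails). The role of $\epsilon$ and of the parameter-dependent $T(p,q)$ is precisely to sidestep this: by truncating to the set $\{p\ge\epsilon,\,q\ge\epsilon\}$ one gets a finite ratio bound depending only on $\epsilon$ and the $\mu_i,\Sigma_i,\pi_i,\nu_j,\Lambda_j,\rho_j$, and one shows the $\mathrm{MSKL}$ contribution from the complement is itself bounded by a parameter-dependent constant (because $\sqrt{p}\log\sqrt{p}$ is integrable against the Gaussian tails). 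Packaging all of these truncation constants, ratio extrema, and the mismatch term into a single $T(p,q)$ and verifying it is a well-defined real number determined by the mixture parameters is the delicate bookkeeping step; once that is done, the chain of inequalities assembling the final bound is routine.
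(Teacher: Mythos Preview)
Your approach differs substantially from the paper's and is far more involved than necessary. The paper does not pass through the decomposition (13) into normalized densities $\alpha,\beta$ at all, and never invokes anything like a reverse-Pinsker inequality or a truncation argument. Instead it applies Jensen's inequality directly to the unnormalized square-roots: writing $r_p/S(p)$ as a probability density, concavity of $\log$ gives $\frac{1}{S(p)}\int r_p\log r_q \le \log\bigl(\frac{1}{S(p)}\int r_pr_q\bigr)$, and together with a parallel estimate for $\int r_p\log r_p$ the paper arrives at $D_{\mathrm{KL}}(r_p\|r_q)\le -S(p)\log \mathrm{BC}(p,q)$, where $\mathrm{BC}(p,q)=\int\sqrt{pq}$ is the ordinary Bhattacharyya coefficient. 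This route avoids entirely the obstacle you flag: there is no need for a bound on $\sup \alpha/\beta$, no $\epsilon$-truncation of the domain, and no tail analysis. The quantity $T(p,q)$ is then the \emph{explicit} closed-form expression $\sum_{i,j}\pi_i\rho_j\,\mathrm{BC}(\phi_i,\psi_j)$, a weighted sum of pairwise component Bhattacharyya coefficients, shown to satisfy $T(p,q)\le \mathrm{BC}(p,q)$ via concavity of the square root. In your plan, by contrast, $T(p,q)$ is an opaque amalgam of truncation constants, likelihood-ratio extrema, and the normalization-mismatch term --- so even if the argument closes, it loses the computability that is the whole point of the bound.

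The reverse-Pinsker step is moreover a genuine gap in your proposal, not just a stylistic detour. As you yourself note, no inequality of the form $D_{\mathrm{KL}}(u\|v)\le C\cdot\bigl(-\log\int\sqrt{uv}\bigr)$ holds without control on $\sup u/v$, and for square-roots of GMMs this supremum is typically infinite on $\mathbb{R}^m$. Your truncation workaround would require showing that the MSKL contribution from the tail set $\{p<\epsilon'\}\cup\{q<\epsilon'\}$ is bounded by a parameter-dependent constant \emph{and} that this constant can be absorbed into the specific form $-\tfrac{\bar S(p)+\bar S(q)}{2}\log(\cdot)$; that reconciliation is not routine bookkeeping but a separate analytical estimate you have not supplied. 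The paper's direct Jensen route on the unnormalized $r_p,r_q$ sidesteps all of this and delivers the explicit $T(p,q)$ in one stroke.
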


\begin{proof} By Jensen's inequality for the concave function $\log$,
\[
\frac{1}{S(p)}\int r_p\log r_p
\le \log\!\Big(\frac{\int r_p^2}{S(p)}\Big)
=-\log S(p).
\]

\[
\frac{1}{S(p)}\int r_p\log r_q \le \log\!\Big(\frac{\int r_pr_q}{S(p)}\Big).
\]
Let $BC(p,q)$ denote Bhattacharyya coefficient $\int \sqrt{pq}=\int r_pr_q$. 
Then,
\[
D_{\mathrm{KL}}(r_p\|r_q)=\int r_p\log r_p-\int r_p\log r_q
\le -S(p)\log \mathrm{BC}(p,q).
\]
Similarly $D_{\mathrm{KL}}(r_q\|r_p)\le -S(q)\log \mathrm{BC}(p,q)$, hence
\begin{equation}\label{eq:mskl-bc}
D_{\mathrm{MSKL}}(p\|q)=\tfrac12\!\big(D_{\mathrm{KL}}(r_p\|r_q)+D_{\mathrm{KL}}(r_q\|r_p)\big)
\ \le\ -\frac{S(p)+S(q)}{2}\,\log \mathrm{BC}(p,q).
\end{equation}

Now for Gaussian components $\phi_i=\mathcal{N}(\mu_i,\Sigma_i)$ and $\psi_j=\mathcal{N}(\nu_j,\Lambda_j)$,
\[
\mathrm{BC}(\phi_i,\psi_j)
=\frac{|\Sigma_i|^{1/4}|\Lambda_j|^{1/4}}
{\big|\tfrac{\Sigma_i+\Lambda_j}{2}\big|^{1/2}}
\exp\!\bigg(-\frac{1}{8}(\mu_i-\nu_j)^\top
\Big(\tfrac{\Sigma_i+\Lambda_j}{2}\Big)^{-1}(\mu_i-\nu_j)\bigg).
\]

As we have
\[
\sum_{i,j}\sqrt{\pi_i\phi_i(x)\,\rho_j\psi_j(x)}
\le \sqrt{\Big(\sum_{i,j}\pi_i\phi_i(x)\Big)\Big(\sum_{i,j}\rho_j\psi_j(x)\Big)}
=\sqrt{p(x)q(x)}.
\]
Integrating both sides yields
\[
\mathrm{BC}(p,q)\ \ge\ \sum_{i,j}\pi_i\rho_j\,\mathrm{BC}(\phi_i,\psi_j),
\]
which is explicit and computable from the mixture parameters. Let $T(p,q)$ denote the number 
 $\sum_{i,j}\pi_i\rho_j\,\mathrm{BC}(\phi_i,\psi_j)$
So, we have $T(p,q)\le \mathrm{BC}(p,q).$ Using \eqref{eq:mskl-bc}

\begin{equation}
\label{eq:mskl_bound}
    D_{\mathrm{MSKL}}(p\|q)\ \le\ -\frac{S(p)+S(q)}{2}\,\log T(p,q).
\end{equation}

Uisng \eqref{thm:upper_bounds_mskl} and \eqref{eq:mskl_bound} we obtain
\[
D_{\mathrm{MSKL}}(p\|q)\ \le\ -\frac{\bar S(p)+\bar S(q)}{2}\,\log T(p,q).
\]
\end{proof}

Combining the results from Theorems 4-8, we obtain the following bound for the MSKL.
% \small
% \begin{equation}
% \max\left(0, S(p)^* \log \frac{S(p)^*}{\bar{A}(p,q)} + S(q)^* \log \frac{S(q)^*}{\bar{A}(p,q)}\right) \leq D_{\mathrm{MSKL}}(p\|q) \leq -\frac{\bar{S}(p) + \bar{S}(q)}{2} \log \max\{T(p,q), \epsilon\}
% \end{equation}
\small
\begin{equation}
\max\!\left(0, S(p)^* \log \frac{S(p)^*}{\bar{A}(p,q)} + S(q)^* \log \frac{S(q)^*}{\bar{A}(p,q)}\right) \!\leq\! D_{\mathrm{MSKL}}(p\|q) \!\leq\! -\frac{\bar{S}(p) + \bar{S}(q)}{2} \log \max\{T(p,q), \epsilon\}
\end{equation}
These bound is explicitly computable from the GMM parameters and ensure $0 \leq D_{\mathrm{MSKL}}(p\|q) < \infty$ for all GMMs.

While these results provide theoretical guarantee, in practice the exact computation of $D_{\mathrm{MSKL}}$ is often intractable. Instead, empirical evaluation requires discretized approximations and careful dataset preprocessing. In our implementation, we employ a grid-based discretization where we normalize the square roots of the densities for numerical stability. Specifically, we compute
\begin{equation}
D_{\mathrm{MSKL}}^{\mathrm{discrete}}(p||q) = \frac{1}{2} \sum_{i=1}^{N} \left[ \tilde{p}_i \log \frac{\tilde{p}_i}{\tilde{q}_i} + \tilde{q}_i \log \frac{\tilde{q}_i}{\tilde{p}_i} \right]
\end{equation}
where $\tilde{p}_i = \frac{\sqrt{p(g_i) + \epsilon}}{\sum_j \sqrt{p(g_j) + \epsilon}}$ and $\tilde{q}_i = \frac{\sqrt{q(g_i) + \epsilon}}{\sum_j \sqrt{q(g_j) + \epsilon}}$ are the normalized density values at grid points $\{g_i\}_{i=1}^{N}$. This normalized variant provides a scale-invariant version of MSKL that inherits the theoretical properties while offering improved numerical stability. In the following section, we describe the datasets and preprocessing steps used to validate our theoretical findings in numerical experiments.

\section{Data Overview and Preprocessing}
This section discuss an overview of the datasets used in the experimental analysis, namely the MPI FAUST dataset and the G-PCD dataset.

\subsection{Human and Animal Datasets}
For the human dataset, we employ the MPI-FAUST dataset \cite{mpi_faust_dataset} (see Figure~\ref{fig:Human}), which provides high-resolution 3D scans of human figures captured across diverse postural configurations. These point clouds exhibit complex surface topology making them ideal for evaluating shape similarity measures under non-rigid transformations. 

\begin{figure}[!htbp]
    \centering
    \includegraphics[width=0.8\linewidth]{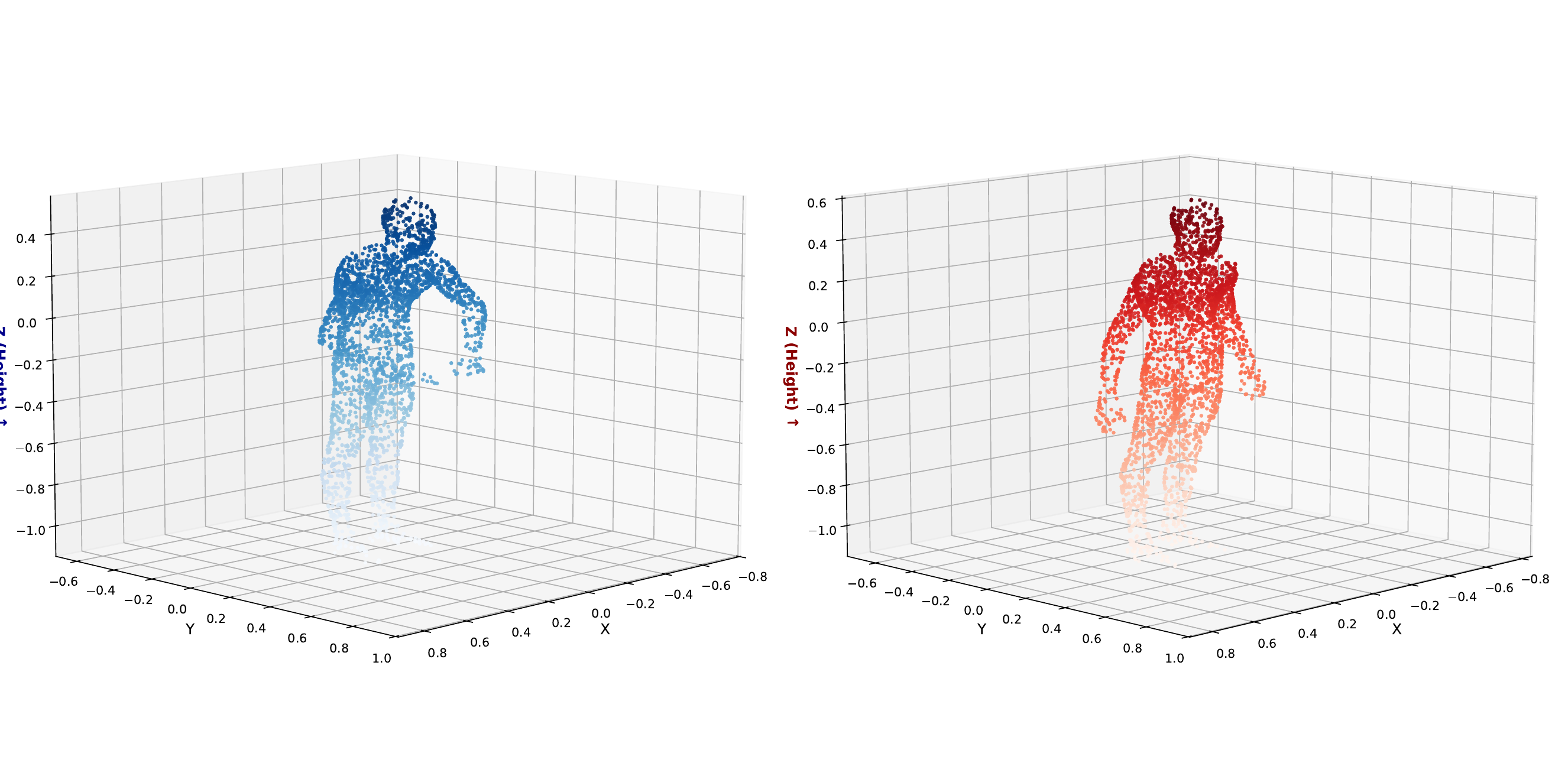}
    \caption{MPI-FAUST human dataset samples showing two distinct poses of a Human body scans.}
    \label{fig:Human}
\end{figure}

For the animal dataset, we utilize the G-PCD dataset \cite{epfl_geometry_point_cloud_dataset} (see Figure~\ref{fig:Animal}), which contains 3D scans of various animal species with various morphological structures. These point clouds feature rich local geometric variations, including sharp features, smooth surfaces, and complex curvature patterns.

\begin{figure}[!htbp]
    \centering
    \includegraphics[width=0.8\linewidth]{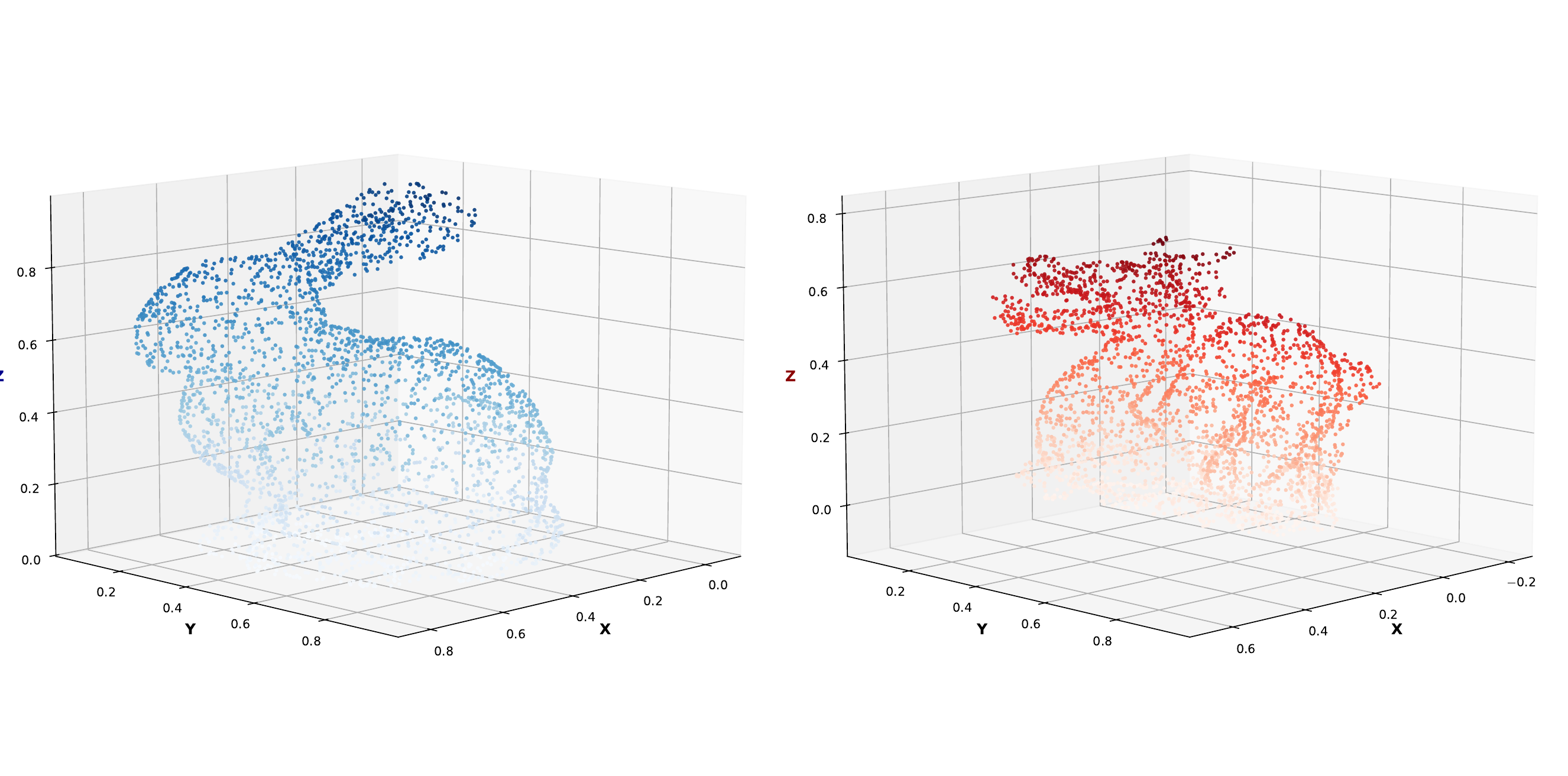}
    \caption{G-PCD animal dataset samples illustrating morphological diversity. Left: Rabbit Point Cloud 1. Right: Dragon Point Cloud 2.}
    \label{fig:Animal}
\end{figure}

\section{Experimental Methodology}

This section describes the complete computational pipeline for measuring shape similarity using the statistical manifold of GMMs. Our approach transforms 3D point clouds into probability distributions on a low-dimensional statistical manifold, where divergence measures quantify shape dissimilarity. The methodology consists of five main stages: preprocessing, geometric feature extraction, manifold embedding, probabilistic modeling, and divergence and metric computation (see Figure~\ref{fig:stage1-5}).

\subsection{Data Preprocessing}

Before feature extraction and manifold-based modeling, each point cloud undergoes a preprocessing step which is designed to establish geometric consistency and ensure compatibility with the computational framework. The raw point clouds often contain several thousand points with non-uniform sampling density across their surfaces. To obtain a uniform resolution while preserving the overall geometric structure, we sample from each point cloud  using the Farthest Point Sampling (FPS) \cite{moenning2003fast}.

FPS progressively selects points that maximize their minimal distance to the current sample set, thereby ensuring broad coverage of the underlying shape. Starting from an initial point chosen uniformly at random, the algorithm iteratively constructs the subset by selecting
\[
p_i = \arg\max_{p \in X \setminus U} \;\min_{j < i} \|p - p_j\|,
\]
which guarantees that each new point is maximally separated from the previously selected ones. Uisng this approach each point cloud is downsampled to 1024 points. To ensure consistency across all scans, each point cloud is then normalized independently. This removes global translation and scale differences, placing all shapes into a comparable coordinate frame without altering their intrinsic geometry.

% \begin{figure}
%     \centering
%     \includegraphics[width=0.8\linewidth]{stage_1.pdf}
% \caption{Stage 1 - Data Preprocessing: Farthest Point Sampling downsamples each point cloud to 1024 points, followed by per-cloud normalization}
% \label{fig:stage1}
% \end{figure}

\subsection{Geometric Feature Extraction}

Following preprocessing, we characterize the local geometry of each point cloud by computing pointwise descriptors derived from neighborhood structure. For every point, we identify its $k=20$ nearest neighbors and compute a set of geometric features based on eigenvalue decomposition of the local covariance matrix.

For each point $\mathbf{x}_i$, let $\mathcal{N}_i = \{\mathbf{x}_{j_1}, \ldots, \mathbf{x}_{j_k}\}$ denote its $k$ nearest neighbors. The covariance matrix of the centered neighborhood is given by
\begin{equation}
\mathbf{C}_i = \frac{1}{k} \sum_{\mathbf{x}_j \in \mathcal{N}_i} (\mathbf{x}_j - \bar{\mathbf{x}}_i)(\mathbf{x}_j - \bar{\mathbf{x}}_i)^\top,
\end{equation}
where $\bar{\mathbf{x}}_i$ is the neighborhood centroid. The eigendecomposition yields eigenvalues $\lambda_1 \geq \lambda_2 \geq \lambda_3 \geq 0$ and corresponding eigenvectors. These eigenvalues characterize the local geometric structure.

From this eigenstructure, we extract eleven geometric features $\mathbf{f}_i \in \mathbb{R}^{11}$. Let $\Sigma_\lambda = \sum_{j=1}^3 \lambda_j$ denote the sum of eigenvalues. The features include: eigenvalue sum ($\Sigma_\lambda$), omnivariance ($(\prod_{j=1}^3 \lambda_j)^{1/3}$), and eigenentropy ($-\sum_{j=1}^3 p_j \ln p_j$ where $p_j = \lambda_j / \Sigma_\lambda$). We compute three shape factors normalized by the eigenvalue sum: anisotropy ($(\lambda_1 - \lambda_3)/\Sigma_\lambda$), planarity ($(\lambda_2 - \lambda_3)/\Sigma_\lambda$), and linearity ($(\lambda_1 - \lambda_2)/\Sigma_\lambda$). Additionally, we include the three normalized eigenvalues ($\lambda_j / \Sigma_\lambda$ for $j=1,2,3$, where the third component represents surface variation), sphericity ($\lambda_3/\lambda_1$), and verticality ($1 - |v_z|$, where $v_z$ is the $z$-component of the eigenvector corresponding to $\lambda_3$) \cite{hackel2016contour}.

% \begin{figure}[!hbp]
%     \centering
%     \includegraphics[width=0.8\linewidth]{stage2.pdf}
% \caption{Geometric feature extraction: $k$-NN neighborhoods (k=20) enable local covariance analysis, yielding 11-D geometric descriptors per point.}
% \label{fig:stage2}
% \end{figure}

\subsection{Manifold Embedding via Isomap}

Following preprocessing and geometric feature extraction, each point is represented jointly by its normalized coordinates and local descriptors. For a point in a cloud, we denote by $\mathbf{x}_i \in \mathbb{R}^3$ its normalized 3D position and by $\mathbf{f}_i \in \mathbb{R}^{11}$ the corresponding geometric feature vector. These are concatenated into a single augmented descriptor
\begin{equation}
\mathbf{y}_i = 
\begin{bmatrix}
\mathbf{x}_i \\
\mathbf{f}_i
\end{bmatrix}
\in \mathbb{R}^{14},
\end{equation}
so that the point cloud is represented as a set of 14-dimensional vectors. After constructing these descriptors for both point clouds, all points are stacked into a single matrix $\mathbf{Y} \in \mathbb{R}^{(n_1 + n_2)\times 14}$, which serves as the input to the manifold learning stage.

We use Isomap \cite{teenbaum2000global} to embed this high-dimensional augmented representation into a low-dimensional latent space. Isomap preserves geodesic distances on the underlying manifold by approximating them via shortest paths in a $k$-nearest neighbor graph and then applying multidimensional scaling. In this setting, the embedding map
\begin{equation}
\mathcal{F} : \mathbb{R}^{14} \rightarrow \mathbb{R}^2
\end{equation}
is learned from $\mathbf{Y}$, producing two-dimensional latent coordinates $\mathbf{z}_i = \mathcal{F}(\mathbf{y}_i)$ for each point.

A key design choice is to embed both point clouds jointly rather than separately. If the two preprocessed and feature-augmented clouds contain $\{\mathbf{y}_i^{(1)}\}_{i=1}^{n_1}$ and $\{\mathbf{y}_j^{(2)}\}_{j=1}^{n_2}$, respectively, we form the combined dataset
\begin{equation}
\mathcal{Y}_{\text{joint}} = \{\mathbf{y}_i^{(1)}\}_{i=1}^{n_1} \cup \{\mathbf{y}_j^{(2)}\}_{j=1}^{n_2},
\end{equation}
and apply Isomap once to $\mathcal{Y}_{\text{joint}}$. The resulting latent coordinates
\[
\{\mathbf{z}_i^{(1)}\}_{i=1}^{n_1}, \qquad \{\mathbf{z}_j^{(2)}\}_{j=1}^{n_2} \subset \mathbb{R}^2
\]
thus lie in a common coordinate system, this ensure that all the subsequent computations are performed on the same latent space.

% \begin{figure}
%     \centering
%     \includegraphics[width=0.8\linewidth]{stage3.pdf}
% \caption{Stage 3 - Manifold Embedding via Isomap: Augmented 14D descriptors (3D coordinates + 11D features) from both point clouds are jointly embedded into a common 2D latent space via Isomap, ensuring comparable representations for subsequent GMM modeling.}
% \label{fig:stage3}
% \end{figure}

\subsection{Statistical Manifold Modeling}

With the two point clouds embedded in a common two-dimensional latent space, we model each shape using a Gaussian Mixture Model (GMM). The theoretical formulation and statistical manifold structure of GMMs are established in Section~\ref{sec:manifold}; here we describe the practical implementation.

To determine the number of mixture components $K$, we evaluate the Bayesian Information Criterion (BIC) for $K \in \{1, 2, \ldots, 10\}$ and select
\begin{equation}
K^{*} = \arg\min_{K} \mathrm{BIC}(K).
\end{equation}
The GMM parameters $\Theta = \{(\boldsymbol{\mu}_k, \Sigma_k, \pi_k)\}_{k=1}^{K^*}$ are then estimated using the EM algorithm. For computational efficiency, we employ diagonal covariance matrices
\begin{equation}
\Sigma_k = \mathrm{diag}(\sigma_{k1}^2,\, \sigma_{k2}^2),
\end{equation}
matching the two-dimensional latent representation. A regularization term of $10^{-6}$ is added to each diagonal element for numerical stability, and EM is terminated after at most 200 iterations. The resulting GMMs provide smooth probabilistic representations $p(\mathbf{z}; \Theta_1)$ and $q(\mathbf{z}; \Theta_2)$ for the two point clouds on the latent manifold, which are used directly for divergence computation.

\begin{figure}
    \centering
    \includegraphics[width=1.0\linewidth]{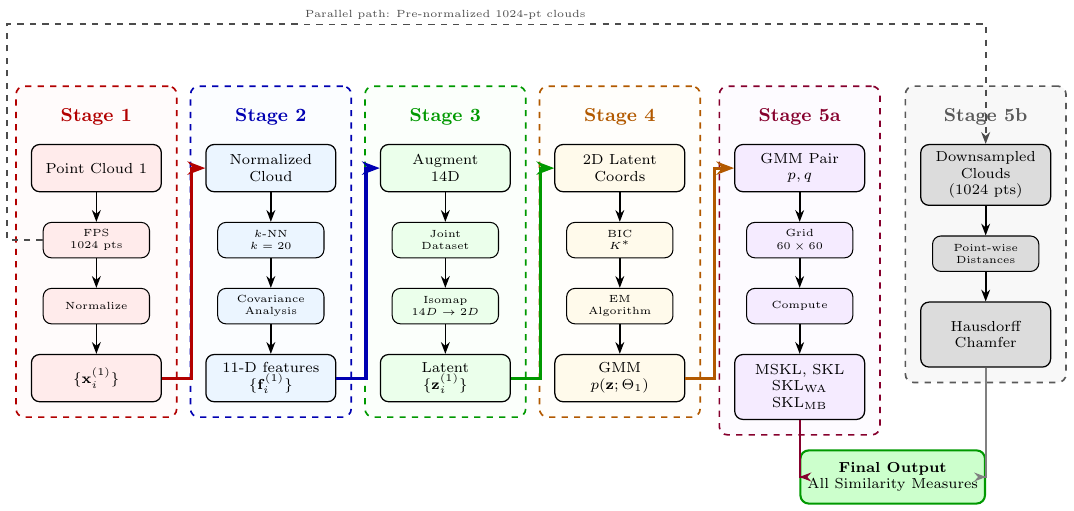}
    \caption{Five-stage computational pipeline for point cloud shape analysis on the statistical manifold of GMMs.}
    \label{fig:stage1-5}
\end{figure}
\subsection{Divergence Computation}
 
The divergence measures are evaluated on the GMMs defined over the 2D latent space. We approximate them via a regular grid discretization. For a given pair of latent embeddings, first compute the bound range $[\mathbf{z}_{\min}, \mathbf{z}_{\max}]$ enclosing all points, expand it by a $5\%$ margin in each direction,
\[
\mathbf{z}_{\min} \leftarrow \mathbf{z}_{\min} - 0.05(\mathbf{z}_{\max} - \mathbf{z}_{\min}), 
\qquad
\mathbf{z}_{\max} \leftarrow \mathbf{z}_{\max} + 0.05(\mathbf{z}_{\max} - \mathbf{z}_{\min}),
\]
and then construct a uniform grid of resolution $60 \times 60$ over this domain. This yields $N = 3600$ evaluation points $\{\mathbf{g}_i\}_{i=1}^{N}$. The corresponding cell area $\Delta A$ is used when required for mass-normalization.

\paragraph{1. Modified Symmetric KL Divergence.}
We approximate the MSKL divergence $D_{\text{MSKL}}(p\|q)$ by evaluating the GMM densities $p(\mathbf{g}_i)$ and $q(\mathbf{g}_i)$ at all grid points, applying the square-root transformation, and normalizing to obtain discrete probability distributions $\{\tilde{p}_i\}_{i=1}^{N}$ and $\{\tilde{q}_i\}_{i=1}^{N}$. The discrete MSKL is then computed as
\begin{equation}
D_{\text{MSKL}}^{\text{disc}}(p\|q) = \frac{1}{2} \sum_{i=1}^{N} 
\left[\tilde{p}_i \log\frac{\tilde{p}_i}{\tilde{q}_i} + \tilde{q}_i \log\frac{\tilde{q}_i}{\tilde{p}_i}\right],
\end{equation}
with regularization $\epsilon = 10^{-8}$ added before normalization to ensure numerical stability.

\paragraph{2. Baseline Distances.}
For comparison, Hausdorff and Chamfer distances are computed on the original downsampled point clouds, using the definitions from Section~\ref{sec:related}. These classical geometric distances provide complementary baselines that operate directly on Euclidean coordinates.

\paragraph{3. Baseline KL Divergences Between GMMs.}
We also evaluate the symmetric KL divergence (SKL) along with the symmetric weighted Average (SKL$_{\text{WA}}$) and symmetric matching-Based (SKL$_{\text{MB}}$) approximations defined in Section~\ref{sec:unbounded}, all computed on the same latent-space GMMs. 

\section{Experimental Results}\label{sec:results}

The proposed geometric method is applied to two different cases and it is compared with other traditional measures. Each case demonstrates the effectiveness of the proposed method for the point cloud comparison.

\subsection{Human Pose Discrimination}

We analyze four scans from a single subject in the MPI-FAUST dataset, representing various postures in which three similar standing poses with minor arm variations (arms down, hands on hips, arms extended) and one walking pose. Table~\ref{tab:faust_results} presents the pairwise similarity measures for all combinations.

\begin{table}[!htbp]
\centering
\caption{Pairwise similarity measures for MPI-FAUST human body scans (same subject, different poses). Lower values indicate greater similarity.}
\label{tab:faust_results}
\begin{tabular}{m{2.5cm}cccccc}
\hline
\textbf{Comparison} & \textbf{MSKL} & \textbf{SKL} & \textbf{SKL$_{\text{WA}}$} & \textbf{SKL$_{\text{MB}}$} & \textbf{Hausdorff} & \textbf{Chamfer} \\
\hline
\raisebox{-.5\height}{\includegraphics[width=1.2cm]{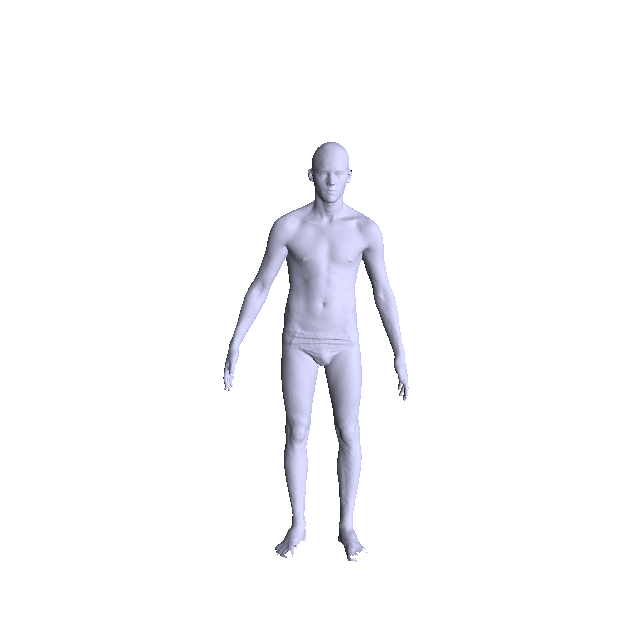}}
\raisebox{-.5\height}{\includegraphics[width=1.2cm]{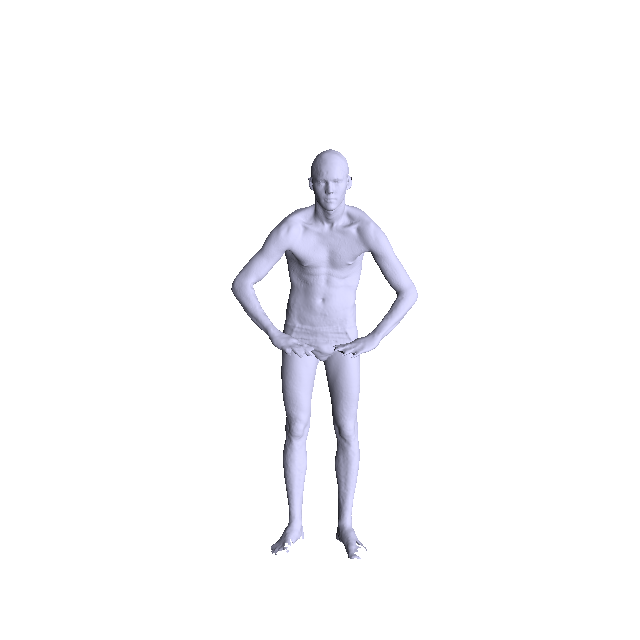}}
& 0.0333 & 0.1023 & 3.056 & 0.6835 & 0.2616 & 0.0571 \\[0.3cm]
\hline
\raisebox{-.5\height}{\includegraphics[width=1.2cm]{tr_scan_000.png}}
\raisebox{-.5\height}{\includegraphics[width=1.2cm]{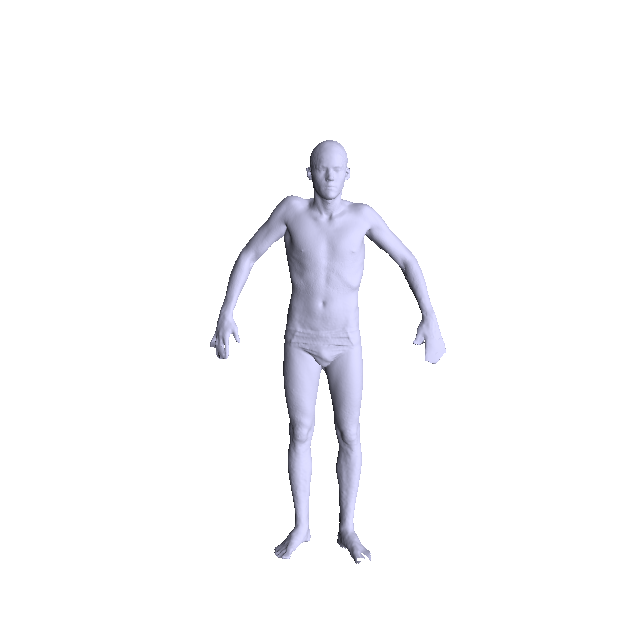}}
& 0.0004 & 0.0012 & 2.531 & 0.0014 & 0.2000 & 0.0340 \\[0.3cm]
\hline
\raisebox{-.5\height}{\includegraphics[width=1.2cm]{tr_scan_001.png}}
\raisebox{-.5\height}{\includegraphics[width=1.2cm]{tr_scan_002.png}}
& 0.1065 & 0.2158 & 4.901 & 1.627 & 0.2616 & 0.0517 \\[0.3cm]
\hline
\raisebox{-.5\height}{\includegraphics[width=1.2cm]{tr_scan_000.png}}
\raisebox{-.5\height}{\includegraphics[width=1.2cm]{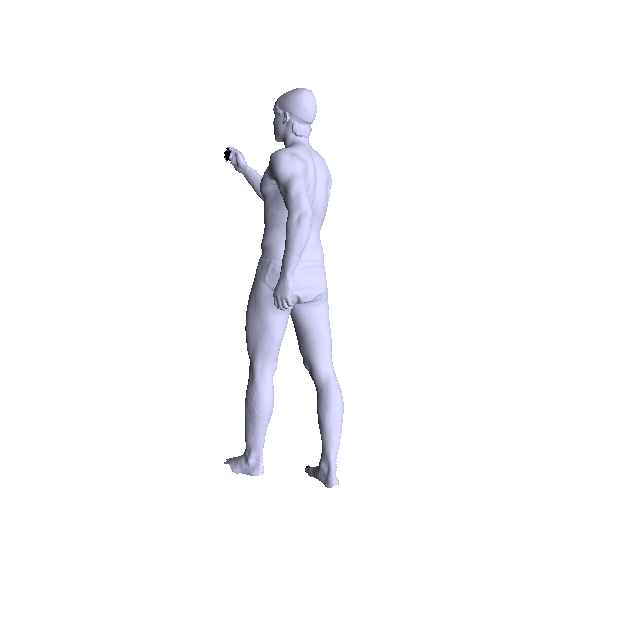}}
& 0.1105 & 0.1778 & 1.502 & 0.4361 & 0.8803 & 0.2127 \\[0.3cm]
\hline
\raisebox{-.5\height}{\includegraphics[width=1.2cm]{tr_scan_001.png}}
\raisebox{-.5\height}{\includegraphics[width=1.2cm]{tr_scan_098.png}}
& 0.0094 & 0.0268 & 2.017 & 0.0766 & 0.4208 & 0.1207 \\[0.3cm]
\hline
\end{tabular}
\end{table}

MSKL divergence demonstrates stable, monotonic behavior with values ranging from 0.0004 (nearly identical standing poses) to 0.1105 (standing versus walking), providing clear discrimination that reflects the postural variation. In contrast, Hausdorff distance exhibits extreme sensitivity also showing poor discrimination for similar poses (0.20--0.26 for most standing comparisons). Chamfer distance offers better stability (0.034--0.213).

The KL-based approximations show major problems. SKL$_{\text{WA}}$ ranges from 1.5--4.9 that fails to distinguish different poses well, while SKL$_{\text{MB}}$ shows unstable behavior with values changing from 0.0014 to 1.627 for similarly posed standing configurations. The SKL shows reasonable ability to distinguish poses 0.0012--0.2158 but has no guaranteed limits for numerical stability and boundedness.

\subsection{Animal Data Comparison}

To evaluate divergence behavior on shapes with fundamentally different global topologies, we compare the Rabbit and dragon models from the G-PCD dataset. Table~\ref{tab:cross_category} presents the results.
\begin{table}[h!]
\centering
\caption{Similarity measures for cross-category comparison (bunny vs dragon from G-PCD dataset).}
\label{tab:cross_category}
\begin{tabular}{cc*{6}{c}}
\hline
\textbf{Bunny} & \textbf{Dragon} & \textbf{MSKL} & \textbf{SKL} & \textbf{SKL$_{\text{WA}}$} & \textbf{SKL$_{\text{MB}}$} & \textbf{Hausdorff} & \textbf{Chamfer} \\
\hline
\raisebox{-0.5\height}{\includegraphics[width=2cm]{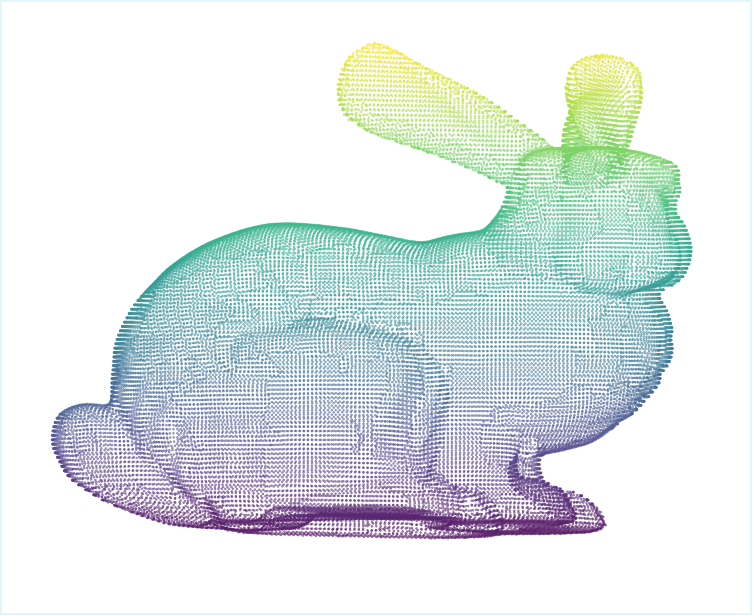}} & 
\raisebox{-0.5\height}{\includegraphics[width=2cm]{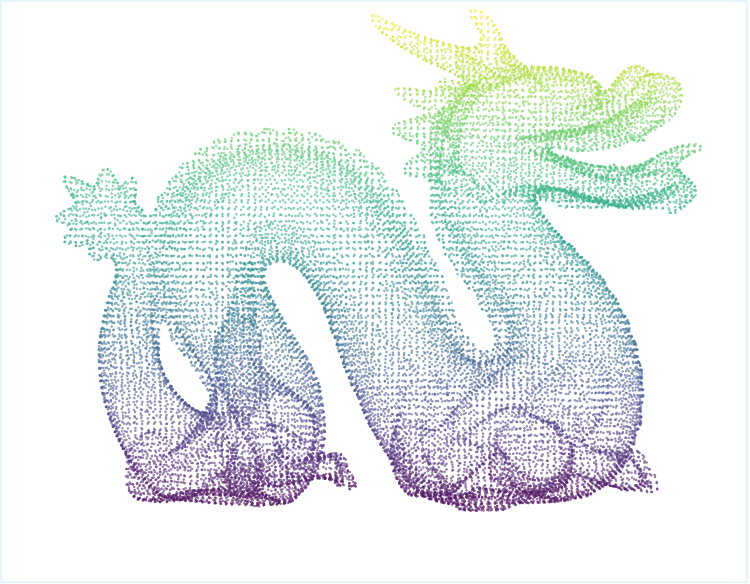}} & 
0.2100 & 0.4207 & 28.139 & 1.3163 & 0.4569 & 0.1411 \\
\hline
\end{tabular}
\end{table}

MSKL yields 0.2100 for these morphologically distinct animals. Among manifold-based measures, MSKL demonstrates bounded numerical behavior: while SKL$_{\text{WA}}$ produces an extreme value of 28.139 MSKL remains within a stable range due to its theoretical upper and lower bounds. SKL produces 0.4207 and SKL$_{\text{MB}}$ yields 1.3163, both operating without guaranteed bounds. Classical distances Hausdorff (0.4569) and Chamfer (0.1411) provide point-wise proximity measures but lack the statistical manifold structure that captures distributional shape properties.

\section{Conclusion and future work}

In this paper, the point clouds are interpreted as the samples from the underlying probability distribution-GMM. Then, the statistical manifold structure is given to the space of point clouds which enables to employ geometric invariants like length, angle, divergence measure, connection and curvature for the analysis of the point clouds. In the present work, we use only divergence measures for the comparison of manifolds of point clouds. Modified Symmetric KL divergence is used for comparison of manifolds of point clouds. The advantages of the proposed method are (i) it offers a mathematically rigorous framework based on information geometry, (ii) it is a comprehensive method that can be used for different types of datasets and (iii) the method's balanced sensitivity and robustness to variations in geometry make it suitable for many practical applications.

Another key theoretical contribution of this work is the derivation of computable upper and lower bounds for the MSKL divergence. These bounds make the divergence theoretically well-posed and allow for practical computation without sacrificing mathematical rigor.

The proposed geometric method consistently exhibits a balanced sensitivity to shape variations and geometry in all the different types of datasets showing its ability to capture the geometry of complex datasets. In the context of 3D shape analysis, the proposed geometric method is effective in differentiating topologically similar shapes, such as human body scans in different postures. Thus, the proposed method provides a new and effective way to compare the manifold of point clouds which could be very useful in the areas like computer vision, 3D modeling, pattern recognition, Biomedical research etc.

The current framework has certain limitations that require future investigation. First, the computational complexity of the proposed framework increases significantly with point cloud size and the number of mixture components, particularly during the EM algorithm and divergence computation on high-resolution grids. While the use of diagonal covariance matrices reduces computational cost, it may not fully capture complex local geometric correlations. Second, the choice of dimensionality for the Isomap embedding is currently fixed at two dimensions for computational efficiency, though higher-dimensional latent representations might preserve more intricate geometric structure for certain shape categories.

The statistical manifold framework can be extended to other geometric invariants beyond divergence measures, including the Riemannian metric, geodesic distances on the manifold, and sectional curvature computations etc. These geometric invariants could provide shape descriptors that capture different aspects of shape similarity.

In conclusion, this work establishes an information geometric framework for point cloud analysis that combines mathematical framework with practical effectiveness. By representing point clouds as points on a statistical manifold of GMMs and introducing the bounded MSKL divergence, we provide a stable and discriminative measure for shape comparison that outperforms existing approximations and baseline metrics. The framework opens new possibilities for using information geometry in 3D shape analysis, with potential applications spanning computer vision, medical imaging, robotics.

\section*{Acknowledgements}
Amit Vishwakarma is thankful to the Indian Institute of Space Science and Technology, Department of Space, Government of India for the award of the doctoral research fellowship.

\section*{Declarations}
\subsection*{Funding}
Not applicable.

\subsection*{Conflict of interest}
The authors declare that they have no known competing financial interests or personal relationships that could have appeared to influence the work reported in this paper.

\subsection*{Ethics approval and consent to participate}
Not applicable.

\subsection*{Data availability}
The study utilized several publicly available datasets for the analysis of point clouds. The MPI FAUST dataset containing human body shapes can be accessed through \url{https://faust-leaderboard.is.tuebingen.mpg.de}. For analyzing animal shapes, we used the EPFL Geometry Point Cloud Dataset (G-PCD) which is available at \url{https://www.epfl.ch/labs/mmspg/downloads/geometry-point-cloud-dataset/}.

\subsection*{Code availability}
The code used in this study will be made available upon reasonable request.

\subsection*{Author contribution}
Both authors contributed equally to this work.

% \section*{Reference}

% \bibliographystyle{sn-mathphys-num}
% \bibliography{sn-bibliography}

\end{document}